\documentclass[table]{article}

 \usepackage[preprint]{neurips_2026}

\usepackage[utf8]{inputenc} 
\usepackage[T1]{fontenc}    
\usepackage{hyperref}       
\usepackage{url}            
\usepackage{booktabs}       
\usepackage{amsfonts,amsmath}       
\usepackage{nicefrac}       
\usepackage{microtype}      
\usepackage{xcolor}         
\usepackage{graphicx} 
\usepackage{amsthm}
\usepackage{tikz}
\usepackage{xcolor}
\usepackage{todonotes}
\usetikzlibrary{arrows.meta, positioning, calc}

\usepackage{booktabs}
\usepackage{tabularx}
\usepackage{array}
\usepackage{xcolor}
\usepackage{amssymb}
\usepackage[normalem]{ulem} 
\renewcommand\arraystretch{1.4}

\newtheorem{example}{Example}

\newtheorem{definition}{Definition}

\newtheorem{remark}{Remark}
\newtheorem{corollary}{Corollary}
\newtheorem{proposition}{Proposition}
\newtheorem{desideratum}{Desideratum}

\definecolor{layerbg}{RGB}{238,237,254}
\definecolor{infobg}{RGB}{241,239,232}
\definecolor{exbg}{RGB}{225,245,238}
\definecolor{boundbg}{RGB}{250,238,218}

\newcommand{\xiaowei}[1]{{\color{blue}#1}}
\newcommand{\changshun}[1]{{\color{red}#1}}
\newcommand{\ayoub}[1]{{\color{purple}#1}}
\newcommand{\raj}[1]{\color{teal}#1}
\newcommand{\dejan}[1]{{\color{brown}#1}}

\newcommand{\martinComment}[1]{\todo[inline,color=yellow]{Martin: #1}}
\renewcommand{\martinComment}[1]{}
\usepackage{comment}

\title{Position: A Three-Layer Probabilistic Assume–Guarantee Architecture
Is Structurally Required for Safe LLM Agent Deployment}

\author{%
  Saddek Bensalem$^{1}$
  \quad Yi Dong$^{2}$
  \quad Martin Fr\"anzle$^{3}$
  \quad Xiaowei Huang$^{2}$
  \quad Janis Kr\"oger$^{3}$ \\[2pt]
  \bf Dejan Nickovic$^{4}$
  \quad Ayoub Nouri$^{5}$
  \quad Rajarshi Roy$^{2}$
  \quad Changshun Wu$^{5}$ \\[6pt]
  \normalfont\normalsize
  $^{1}$CSX-AI, France \quad
  $^{2}$University of Liverpool, UK \quad
  $^{3}$Carl von Ossietzky Universit\"at Oldenburg, Germany \\
  $^{4}$Austrian Institute of Technology, Austria \quad
  $^{5}$Universit\'e Grenoble Alpes, France
}

\begin{document}

\maketitle

\begin{abstract}
  This position paper argues that enforcing LLM agent safety within a single abstraction layer is not merely suboptimal but categorically insufficient for deployed LLM agents---a structural consequence of how agent execution works, not a contingent limitation of current systems. The three dimensions that jointly constitute safe operation---semantic intent and policy compliance, environmental validity, and dynamical feasibility---each depend on a strictly distinct information set that becomes available at a different stage of execution. No single guardrail can certify all three. 
We argue the community must respond with a contract-based architecture in which each safety dimension is enforced by an independently certified layer whose probabilistic guarantee satisfies the assumption of the next. We sketch such an architecture and derive the compositional system-level safety bounds it admits via the chain rule of probability. Three open problems stand between this and a deployable
standard: bound estimation from non-i.i.d.\ traces, graceful degradation of contracts under deployment drift,  and
extension to multi-agent settings---the most important
unfinished business in LLM agent runtime assurance. 
\end{abstract}

\section{Introduction}
The deployment of LLM agents in safety-critical settings poses a
qualitatively new challenge for runtime assurance.
An LLM agent uses a language model as its reasoning engine to
implement task-oriented functionality: it receives instructions
from human users, conducts multi-step reasoning and planning, and
interacts with the environment through tool invocation or physical
execution.
Unlike conventional software, whose behaviour is formally
specified, and unlike static language models, whose outputs do
not act on the world, LLM agents combine non-deterministic
reasoning with closed-loop execution in which generated plans
influence the world and future observations.

\textbf{We argue a strong claim}: {safe deployment of LLM agents
is structurally inadequate under any single-layer enforcement
design, no matter how systematically engineered~\cite{dong2024guardrails}. The three dimensions of safe operation (semantic intent alignment and policy compliance, 
environmental validity, and dynamical feasibility) become
verifiable only at 
distinct stages of execution,
so any architecture that collapses them into a single
abstraction necessarily leaves at least one dimension
uncertified.
What is required is not a stronger guardrail, but a staged,
contract-based architecture in which three independently certified
layers (user, operational, functional) compose sequentially
through assume–guarantee reasoning}.
The limitation is 
not empirical but structural:
no amount of prompt engineering, model fine-tuning, or
execution monitoring confined to a single abstraction
can certify properties whose truth depends on information
becoming available only later in the execution chain.


LLMs are vulnerable to hallucinations, prompt injection,
distributional instability, and adversarial
manipulation~\cite{LLMtrustworthySurvey}.
When embedded in agents, these vulnerabilities are
amplified~\cite{li2025mind}: the consequences of an incorrect or
unsafe generation are no longer confined to textual output but
may propagate to physical actions, API calls, or system state
transitions.
\emph{The problem of safety in LLM agents is therefore not merely one
of content filtering but of runtime assurance in dynamic and
uncertain environments}.
Because the space of possible LLM agent failures is open-ended and cannot
be enumerated at design time, empirical evaluation alone cannot establish
the systematic assurance that safety-critical deployment demands.
Throughout this paper, \emph{guarantee} refers to probabilistic
assume--guarantee contracts---bounds on satisfaction probability rather
than deterministic proofs---a distinction that matters for interpreting
the framework's contributions against the broader literature on formal
verification of agents~\cite{miculicich2025veriguardenhancingllmagent,kamath2025enforcingtemporalconstraintsllm}.

That this gap has practical consequences is visible in the
benchmark record. Of sixteen popular agents evaluated on
AgentSafetyBench, none achieves a safety score above 60\%~\cite{zhang2024agentsafetybench},
with behavioural safety (30.4\%) significantly lagging behind
content safety (68.4\%). While these failures cannot be
attributed to a single cause—such as weak guardrails or
distributional mismatch—their pattern is instructive:
failures concentrate in environmental and physical dimensions,
beyond the reach of purely semantic controls.
A similar limitation appears from a security perspective.
Single-layer defences leave critical gaps, as reflected in
attack success rates exceeding 84\% on the Agent Security Bench.
Taken together, these observations do not constitute proof
of the structural argument developed below, but they are
consistent with it: both safety and security failures emerge
precisely where semantic-layer protections cease to apply.
This convergence motivates the need for a multi-layer
architecture, as developed in Section~\ref{sec:background-main}.

%

Recent work has begun to cross layer boundaries: VeriGuard~\cite{miculicich2025veriguardenhancingllmagent} 
achieves near-zero attack success rates by combining offline policy synthesis and formal verification  with lightweight  runtime monitoring, and
Agent-C~\cite{kamath2025enforcingtemporalconstraintsllm} reaches perfect temporal-constraint conformance
via SMT-enforced generation.
Yet both remain scoped to semantic intent and do not address environmental
validity or dynamical feasibility. Further complementary benchmarks, AgentHarm \cite{andriushchenko2024agentharm} and BAD-ACTS \cite{nother2025badacts}, have confirmed that agentic frameworks dramatically amplify misuse risk and that only multi-step monitoring protocols begin to render deployment plausible—consistent with the layered structure advocated here. 
While security and its limitations are briefly mentioned, this paper primarily focus on safety aspects.

The structure of the paper is as follows. Section \ref{sec:background-main} develops the structural argument formally. Section \ref{sec:framework} presents the three-layer instantiation and its safety bounds. Section \ref{sec:limitations} identifies four open problems that stand between that architecture and a deployable standard. It discusses alternative views in Section~\ref{sec:alternatives} and concludes in Section~\ref{sec:conclusions}.

\section{From Safety Dimensions to Contract-Based Design}\label{sec:background-main}



This section surveys existing approaches to LLM agent safety, identifies the 
structural characteristic that motivates our
framework, and argues for the three-layer architecture as the
minimal sufficient response.
 We focus on safety mechanisms operating \emph{externally} to the LLM weights, 
excluding fine-tuning approaches such as RLHF~\cite{NIPS2017_d5e2c0ad}, 
SFT~\cite{ouyang2022training}, and DPO~\cite{10.5555/3666122.3668460}, which require 
parameter modification and are often impractical when model weights are inaccessible. 
Instead, we focus on external guardrails that enforce safe behaviour at 
runtime, independently of the underlying model.

Unlike static text generation, LLM agents perform multi-step reasoning and planning, 
invoke tools, and interact with dynamic environments, causing errors to propagate 
beyond linguistic outputs to external systems, physical processes, or long-term state 
trajectories~\cite{li2025mind}.

\noindent\textbf{Existing Approaches and Their Limits}\label{sec:related}
%
%
The largest body of work targets \emph{semantic intent alignment and policy
compliance}. 
System-level approaches enforce constraints externally.
AgentSpec~\cite{wang2025agentspeccustomizableruntimeenforcement} introduces a lightweight
domain-specific language for runtime constraint enforcement, preventing
unsafe executions in over 90\% of code-agent cases while remaining
computationally lightweight.
Agent-C~\cite{kamath2025enforcingtemporalconstraintsllm} encodes temporal safety requirements
as first-order logic constraints enforced via SMT solving, achieving
perfect conformance on retail and airline benchmarks.
ShieldAgent~\cite{chen2025shieldagent} structures verifiable rules
extracted from policy documents into probabilistic rule circuits,
achieving 90.1\% rule recall.
These are genuine advances: they demonstrate that specification-driven
runtime guardrails are both practical and effective at the user
layer. The most systematic proposal within this single-layer paradigm is that 
of Dong et al.~\cite{dong2024guardrails}, who advocate a rigorous 
requirement-driven design process---covering neural-symbolic 
implementation, statistical certification, and a full systems 
development lifecycle---applied to a guardrail conceived as an 
input-output filter on the LLM. We do not dispute the value of this 
engineering rigour; we argue that no single-layer design, however 
carefully constructed, can certify all three safety dimensions, because 
the information required to do so becomes available only at strictly 
distinct stages of execution.

A second strand addresses the \emph{execution environment}.
Out-of-distribution detection~\cite{ren2023outofdistribution,zhang2025your}
identifies when model inputs deviate from training distribution, though
without determining whether the current world lies within the Operational Design
Domain (ODD)~\cite{iso34503,SAE_J3016_2021,BSI_PAS1883_2020}
under which system-level guarantees were derived.
Inner Monologue~\cite{huang2023inner} feeds environmental feedback back
into the LLM to support replanning, but reactively, once execution has
begun.
RoboGuard~\cite{ravichandran2026safety} and
Safety~Chip~\cite{yang2024plug} use world-model information to
instantiate action-level constraints via temporal logic---instrumentally,
rather than as an independent determination of whether execution is
authorised at all given the current world state, i.e.,\ whether the system is operating within its ODD.

A third strand targets \emph{low-level execution safety}.
Specification-based runtime monitoring~\cite{DBLP:series/lncs/BartocciDDFMNS18} tracks
system trajectories against temporal logic specifications.
Control barrier functions~\cite{DBLP:conf/eucc/AmesCENST19} enforce forward invariance
of a safe set with minimal intervention, though their guarantees degrade
when environmental assumptions break at runtime.
Shielding in reinforcement
learning~\cite{DBLP:conf/aaai/AlshiekhBEKNT18,DBLP:conf/ijcai/YangMRR23}
synthesises correctors from
temporal logic specifications; both absolute and probabilistic variants
require knowledge of MDP safety dynamics---an assumption LLM agent
environments structurally violate.
Recent work integrating CBFs with LLM
planners~\cite{khan2025safer,wang2025robosafe,zhao2026sagellmsafegeneralizablellm}
demonstrates that functional-layer enforcement is independently
necessary and cannot be subsumed by user-level guardrails.

Recent efforts attempt broader coverage.
Agent Behavioral Contracts~\cite{bhardwaj2026abc} introduce
probabilistic $(p,\delta,k)$-satisfaction notions---meaning a contract holds with probability at least $p$, confidence
$1{-}\delta$, over windows of $k$ steps---with a Drift Bounds
Theorem, detecting 5.2--6.8 soft violations per session that
uncontracted baselines miss entirely.
\mbox{PRO\textsuperscript{2}GUARD}~\cite{wang2025proguard} extends
enforcement into the probabilistic domain via learned Discrete-Time
Markov Chains, achieving PAC-correctness on risk estimates.
Shamsujjoha et al.~\cite{shamsujjoha2024swiss} contribute a systematic taxonomy of runtime
guardrails for Foundation Model based agents, grounded in a systematic literature review and structured
around three dimensions: quality attributes, pipeline stages, and artefacts. This is the
closest antecedent to the present work and clearly motivates multi-layered thinking. The
key difference is organisational principle: their taxonomy is \emph{artefact-driven},
decomposing guardrail design by which pipeline objects (goals, plans, tools, outputs) are
being guarded at each stage. Our architecture is \emph{information-driven}, deriving
layer boundaries from a structural argument about \emph{when} each category of safety
claim can first be verified---given user instructions, policies and rules, before any world observation (User Layer)
, given current
sensor data (Operational Layer), or continuously during actuation (Functional Layer).
This distinction is not merely taxonomic: it grounds the contract-chain structure and the
compositional probability bounds of Section~\ref{sec:framework}, which depend on the
sequential availability of information rather than on the artefact being inspected. 

An artefact-driven decomposition cannot ensure that certifications relying on temporally distinct information sets remain consistent. 
This breaks the assumption–guarantee chain, as guarantees are no longer established against the correct information domain. 
The information-driven decomposition avoids this issue by construction and enables compositional safety bounds~\ref{sec:bounds}.

\noindent\textbf{Contract-Based Design as the Unifying Principle}
\label{sec:contracts}
The structural characteristic identified above 
 demands a response with three properties simultaneously,
 each corresponding to a structural failure mode of
 single-layer enforcement: it must reason across
 heterogeneous verification methods without forcing them
 into a common formalism; it must compose guarantees across layers whose
dynamics are non-deterministic and non-stationary; and it must yield
modular statistical bounds certifiable layer by layer.
We argue that probabilistic assume--guarantee (A/G)
contracts~\cite{delahaye2011probabilistic,DBLP:conf/acsd/DelahayeCL10,%
hampus2024probabilistic, blohm2024towards} provide exactly this foundation. 

Heterogeneous verification methods are accommodated by the contract
structure itself.
Each safety component is specified by an assumption on its inputs and a
guarantee on its outputs, scoped to its own abstraction level and
information set.
Semantic, operational, and dynamical dimensions need not be unified
into a single model~\cite{giannakopoulou2018compositional}: each layer
carries an independently verifiable contract, and the heterogeneity of
verification methods and specification formalisms across layers is a feature, not a limitation.
This matters in practice: existing CPS assumptions are
typically informal and split between deterministic and
probabilistic formalisations~\cite{li2025systematizing}, a fragmentation
that worsens when one layer is an LLM---a gap we begin to
address here.

Principled composition across non-stationary layers follows from the
quantitative semantics of contracts.
Satisfaction is probabilistic: each contract holds with some
probability, meaning that executions satisfying the assumption also
satisfy the guarantee with that probability. 
Sequential composition is then well-defined: the chain rule
decomposes system-level safety into a product of conditional
probabilities aligned with the architectural
stages~\cite{delahaye2011probabilistic}, and the
Fr\'{e}chet--Bonferroni bound yields a non-trivial
lower bound from marginals alone whenever individual probabilities are
sufficiently high.
Critically, this rule does not presuppose deterministic or stationary
subsystem dynamics, a necessary distinction from existing CPS contract
frameworks---which assume well-characterised noise models---rather than
the discrete token sampling and context-window non-determinism
characteristic of LLMs. Modular statistical guarantees follow from the same decomposition: 
A/G contracts compose locally certifiable bounds into a system-level 
guarantee without global analysis, complementing PAC-style statistical 
interpretability as a refinement adapted to the non-stationary, 
finite-trace setting of LLM agents; the estimation challenges this 
raises are developed in Section~\ref{sec:bounds}.

Within each layer, neural-symbolic methods should be adopted
\cite{chen2025shieldagent,kamath2025enforcingtemporalconstraintsllm}: the symbolic
component renders contract assumptions and guarantees formally
checkable, while the neural component handles the perceptual and
linguistic complexity that purely symbolic approaches cannot scale to.

\noindent\textbf{The Three-Layer Architecture Is the Necessary Response}
\label{sec:position}
Let $\mathcal{I}_U$, $\mathcal{I}_O$, and $\mathcal{I}_F$ denote the three information sets 
available respectively (i) prior to world observation,
(ii) after world-state estimation but before actuation,
and (iii) during control-loop execution.
$\mathcal{I}_U$ contains user intent, policy, and role metadata but
no sensor-derived state.
$\mathcal{I}_O$ contains the estimated world state $\hat{w}$ and ODD
configuration but lacks real-time dynamical trajectory information.
$\mathcal{I}_F$ contains full state trajectories and control inputs but
operates after both intent validation and ODD authorisation.
These sets are strictly ordered,
$\mathcal{I}_U \not\subseteq \mathcal{I}_O \not\subseteq \mathcal{I}_F$
(Appendix~\ref{app:impossibility}), and mutually
non-substitutable as certification domains. 
%

Let $\Phi_U(e)$, $\Phi_O(e)$, and $\Phi_F(e)$ denote the Boolean safety predicates for semantic, operational, and dynamical safety respectively (formally defined in Appendix~\ref{app:impossibility}).
Because $\Phi_U$, $\Phi_O$, and $\Phi_F$ depend on strictly
distinct information sets that become available at different
stages, any architecture that collapses
them into fewer than three independently certified stages must
either violate the desired semantic, operational or dynamical safety (see the desiderata D1--D3 of Appendix~\ref{app:impossibility}) or
implicitly reconstruct the three-stage structure within a
single component (Proposition~\ref{prop:collapse}).
The minimality of three layers is therefore not a design choice
but a consequence of the temporal ordering $\tau_U < \tau_O <
\tau_F$ of certifiable information, where $\tau_i$ denotes the
earliest time at which $\Phi_i$ can be certified: fewer layers break the
contract chain or hide its boundaries; additional layers may
refine it but cannot reduce this minimum
(Appendix~\ref{app:impossibility}, Corollary~\ref{cor:three-necessary}).

\begin{figure}[http]
\centering
\resizebox{\linewidth}{!}{%
\begin{tikzpicture}[
  font=\small,
  >=stealth,
  every node/.style={align=center},
  layerbox/.style={
    draw, rounded corners=4pt, thick,
    minimum width=5.2cm, minimum height=1.6cm,
    text width=4.8cm
  },
  infobox/.style={
    draw, rounded corners=4pt,
    minimum width=3.8cm, minimum height=1.6cm,
    text width=3.4cm
  },
  dimbox/.style={
    draw, rounded corners=4pt,
    minimum width=3.4cm, minimum height=1.6cm,
    text width=3.0cm
  },
  exbox/.style={
    draw=gray!40, dashed, rounded corners=4pt,
    fill=gray!5,
    minimum width=4.3cm, minimum height=1.6cm,
    text width=4.2cm,
    font=\scriptsize
  },
  ulayer/.style={fill=purple!10, draw=purple!50},
  olayer/.style={fill=teal!10,   draw=teal!50},
  flayer/.style={fill=orange!10, draw=orange!50},
  sysbox/.style={fill=gray!10,   draw=gray!50},
  contractarrow/.style={dashed, ->, thick, gray},
  feedbackarrow/.style={dashed, ->, gray},
  solidarrow/.style={->, thick},
  leaderarrow/.style={dotted, gray!60, thin, ->}
]

\def\xinfo{0}
\def\xlayer{5.0}
\def\xdim{10.0}
\def\xex{14.6}     
\def\yU{0}
\def\yO{-2.6}
\def\yF{-5.2}
\def\ybound{-7.5}
\def\yworld{-9.5}

\node[font=\footnotesize\bfseries] at (\xinfo,  1.2) {Information available};
\node[font=\footnotesize\bfseries] at (\xlayer, 1.2) {Layer \& A/G contract};
\node[font=\footnotesize\bfseries] at (\xdim,   1.2) {Dimension certified};
\node[font=\footnotesize\bfseries\itshape, gray] at (\xex, 1.2) {Running example};

\draw[gray!30, thin] (2.4,  1.5) -- (2.4,  -12.2);
\draw[gray!30, thin] (7.6,  1.5) -- (7.6,  -12.2);
\draw[gray!30, thin] (12.2, 1.5) -- (12.2, -12.2);

\node[font=\footnotesize, gray] at (\xlayer, 2.0)
  {plan $p$ (LLM output)};
\draw[solidarrow] (\xlayer, 1.8) -- (\xlayer, 0.85);

\node[infobox, ulayer] (IU) at (\xinfo, \yU) {%
  $\mathcal{I}_U$\\[2pt]
  \footnotesize Intent $\cdot$ policy $\cdot$ role metadata\\
  \textit{no world observation}};

\node[layerbox, ulayer] (LU) at (\xlayer, \yU) {%
  \textbf{User assurance layer} $\Sigma_U$\\[3pt]
  \footnotesize $A_U$: authorised request\\
  $\Gamma_U$: intent $\cdot$ policy $\cdot$ ethics \checkmark};

\node[dimbox, ulayer] (DU) at (\xdim, \yU) {%
  \textbf{Semantic}\\[2pt]
  \footnotesize Cognitive $\cdot$ regulatory\\ethical alignment};

\node[exbox] (EU) at (\xex, \yU) {%
  \textit{Caregiver instructs robot:}\\[2pt]
  ``Visit Rooms 12, 15, 18;
  check hydration; bring
  water if needed; no
  disturbance at rest.''\\[2pt]
  \textcolor{purple!70}{Blocks: rest-hour entry,}\\
  \textcolor{purple!70}{camera-active routing}};

\draw[solidarrow]   (IU.east) -- (LU.west);
\draw[solidarrow]   (LU.east) -- (DU.west);
\draw[leaderarrow]  (DU.east) -- (EU.west);

\node[font=\scriptsize, gray] at (\xlayer, -1.3)
  {$\Gamma_U \Rightarrow A_O$ \quad (filtered plan descends)};
\draw[contractarrow] (\xlayer, -0.9) -- (\xlayer, \yO+0.9);

\node[infobox, olayer] (IO) at (\xinfo, \yO) {%
  $\mathcal{I}_O$\\[2pt]
  \footnotesize Sensor data $\xi$ $\cdot$ world state $\hat{w}$\\
  ODD spec $\cdot$ envelope $\mathcal{E}$};

\node[layerbox, olayer] (LO) at (\xlayer, \yO) {%
  \textbf{Operational Assurance layer} $\Sigma_O$\\[3pt]
  \footnotesize $A_O$: $\Gamma_U$ holds (upstream filter)\\
  $\Gamma_O$: $w \in \mathcal{W}_\mathrm{ODD}$ $\cdot$ envelope $\mathcal{E}$};

\node[dimbox, olayer] (DO) at (\xdim, \yO) {%
  \textbf{Operational}\\[2pt]
  \footnotesize ODD membership\\autonomy level};

\node[exbox] (EO) at (\xex, \yO) {%
  \textit{World-state check:}\\[2pt]
  Room 15: clinical procedure\\
  \textcolor{teal!70}{$\to$ sub-plan blocked}\\[2pt]
  Corridor to R12: under-lit\\
  \textcolor{teal!70}{$\to$ valid at $v \leq v_{\max}$}};

\draw[solidarrow]   (IO.east) -- (LO.west);
\draw[solidarrow]   (LO.east) -- (DO.west);
\draw[leaderarrow]  (DO.east) -- (EO.west);

\node[font=\scriptsize, gray] at (\xlayer, \yO-1.3)
  {$\Gamma_O \Rightarrow A_F$ \quad (authorised execution descends)};
\draw[contractarrow] (\xlayer, \yO-0.9) -- (\xlayer, \yF+0.9);

\node[infobox, flayer] (IF) at (\xinfo, \yF) {%
  $\mathcal{I}_F$\\[2pt]
  \footnotesize Trajectory $x(t)$ $\cdot$ control $u(t)$\\
  system dynamics $(f,g)$};

\node[layerbox, flayer] (LF) at (\xlayer, \yF) {%
  \textbf{Functional Assurance layer} $\Sigma_F$\\[3pt]
  \footnotesize $A_F$: $\Gamma_O$ holds (upstream filter)\\
  $\Gamma_F$: no collision $\cdot$ $f \leq f_\mathrm{max}$};

\node[dimbox, flayer] (DF) at (\xdim, \yF) {%
  \textbf{Dynamical}\\[2pt]
  \footnotesize CBF $\cdot$ STL monitor\\safe-stop};

\node[exbox] (EF) at (\xex, \yF) {%
  \textit{Runtime enforcement:}\\[2pt]
  Resident enters corridor\\
  \textcolor{orange!80}{$\to$ hold ($d < d_{\min}$)}\\[2pt]
  R18: furniture rearranged\\
  \textcolor{orange!80}{$\to$ escalate, update map}\\[2pt]
  Handover: CBF constrains\\
  arm force $f \leq f_{\max}$};

\draw[solidarrow]   (IF.east) -- (LF.west);
\draw[solidarrow]   (LF.east) -- (DF.west);
\draw[leaderarrow]  (DF.east) -- (EF.west);

\draw[contractarrow] (\xlayer, \yF-0.9) -- (\xlayer, \ybound+0.55);

\node[sysbox, draw, rounded corners=4pt, thick,
      minimum width=7.0cm, minimum height=1.2cm,
      text width=6.6cm] (BOUND) at (\xlayer, \ybound) {%
  \textbf{System-level safety bound (B4)}\\[2pt]
  \footnotesize
  $\Pr(\text{safe}) \geq p_U \cdot p_{O|U} \cdot p_{F|OU}$\\
  $p_{O|U}$, $p_{F|OU}$ independently estimable\\
  \quad{\scriptsize(compositional certification via chain rule)}};

\draw[solidarrow] (\xlayer, \ybound-0.65) -- (\xlayer, \yworld+0.4);

\node[sysbox, draw, rounded corners=4pt,
      minimum width=3.8cm, minimum height=0.9cm] (WORLD)
  at (\xlayer, \yworld) {\textbf{Physical world / actuator}};

\def\xfb{-2.6}

\draw[gray, thin] (LF.west) -- ++(-1.05, 0) -- (\xfb, \yF);
\draw[gray, thin] (LO.west) -- ++(-1.05, 0) -- (\xfb, \yO);
\draw[feedbackarrow] (\xfb, \yF) -- (\xfb, \yU+0.1);

\node[font=\scriptsize, gray, left] at (\xfb, {(\yF+\yO)/2}) {$s_F$};
\node[font=\scriptsize, gray, left] at (\xfb, {(\yO+\yU)/2}) {$s_O$};
\node[font=\scriptsize, gray, rotate=90]
  at (\xfb-0.8, {(\yF+\yU)/2}) {safety signals (bottom-up)};

\def\xrb{17.0}   
\def\yover{1.7}  

\draw[orange!60, thin] (EF.east) -- (\xrb, \yF);
\draw[teal!60,   thin] (EO.east) -- (\xrb, \yO);
\draw[gray, thin]      (\xrb, \yF) -- (\xrb, \yover);
\draw[->, thick, purple!60]
  (\xrb, \yover) -- (\xlayer, \yover) -- (LU.north);

\node[font=\scriptsize, gray, rotate=-90]
  at (\xrb+0.45, {(\yF+\yU)/2+2.2}) {plan infeasible $\to$ recompute $p'$};

\node[font=\scriptsize, purple!70, above] at ({(\xrb+\xlayer)/2}, \yover)
  {revised plan $p'$};

\node[font=\scriptsize, orange!70, right] at (\xrb+0.05, \yF)
  {$\Gamma_F$ violated};
\node[font=\scriptsize, teal!70,   right] at (\xrb+0.05, \yO)
  {$\Gamma_O$ violated};

\def\ytl{-10.6}

\draw[->, gray!60, thin] (-1.6, \ytl) -- (12.0, \ytl)
  node[right, font=\scriptsize, gray]{execution time};

\draw[gray!60, thin] (\xinfo,  \ytl+0.08) -- (\xinfo,  \ytl-0.08);
\node[font=\scriptsize, gray, below] at (\xinfo, \ytl-0.1)
  {pre-execution\\($\mathcal{I}_U$ only)};

\draw[gray!60, thin] (\xlayer, \ytl+0.08) -- (\xlayer, \ytl-0.08);
\node[font=\scriptsize, gray, below] at (\xlayer, \ytl-0.1)
  {pre- \& during\\($\mathcal{I}_O$ added)};

\draw[gray!60, thin] (\xdim,   \ytl+0.08) -- (\xdim,   \ytl-0.08);
\node[font=\scriptsize, gray, below] at (\xdim, \ytl-0.1)
  {control-loop runtime\\($\mathcal{I}_F$ added)};

\node[font=\scriptsize, gray, below] at (\xlayer, \ytl-1.1)
  {
  $\mathcal{I}_U \not\subseteq \mathcal{I}_O \not\subseteq \mathcal{I}_F$
   \;---\; each layer certifies only what its information permits};

\def\yleg{-13}
\def\xleg{-1.4}
\def\swid{0.35}
\def\shei{0.25}

\filldraw[fill=purple!10, draw=purple!50]
  (\xleg, \yleg) rectangle +(\swid, \shei);
\node[font=\scriptsize, gray, right]
  at (\xleg+\swid+0.1, \yleg+0.12) {User Layer};

\filldraw[fill=teal!10, draw=teal!50]
  (\xleg+2.2, \yleg) rectangle +(\swid, \shei);
\node[font=\scriptsize, gray, right]
  at (\xleg+2.2+\swid+0.1, \yleg+0.12) {Operational Layer};

\filldraw[fill=orange!10, draw=orange!50]
  (\xleg+5.0, \yleg) rectangle +(\swid, \shei);
\node[font=\scriptsize, gray, right]
  at (\xleg+5.0+\swid+0.1, \yleg+0.12) {Functional Layer};

\filldraw[fill=gray!10, draw=gray!50]
  (\xleg+7.8, \yleg) rectangle +(\swid, \shei);
\node[font=\scriptsize, gray, right]
  at (\xleg+7.8+\swid+0.1, \yleg+0.12) {System output};

\filldraw[fill=gray!5, draw=gray!40, dashed]
  (\xleg+10.2, \yleg) rectangle +(\swid, \shei);
\node[font=\scriptsize, gray, right]
  at (\xleg+10.2+\swid+0.1, \yleg+0.12) {Running example};

\draw[contractarrow]
  (\xleg+13.0, \yleg+0.12) -- ++(0.7, 0);
\node[font=\scriptsize, gray, right]
  at (\xleg+13.8, \yleg+0.12) {contract chain / safety signal};

\draw[->, thick, purple!60]
  (\xleg+13.0, \yleg-0.4) -- ++(0.7, 0);
\node[font=\scriptsize, gray, right]
  at (\xleg+13.8, \yleg-0.4) {plan recomputation $p'$};

\end{tikzpicture}%
}
\caption{Three-layer probabilistic assume--guarantee architecture with
running example (right column).
Each layer $\Sigma_i$ is certified against the
information set $\mathcal{I}_i$ first available at its execution stage.
Forward contract propagation is shown by dashed downward arrows
($\Gamma_U\Rightarrow A_O$, $\Gamma_O\Rightarrow A_F$).
Left margin: upward safety signals ($s_F\to s_O\to s_U$).
Right margin: when $\Gamma_O$ or $\Gamma_F$ is violated the User Layer
recomputes a revised plan $p'$, making the architecture a live
bidirectional assurance loop.\vspace{-2em}}
\label{fig:three-layer}
\end{figure} 


\section{Three-Layer Framework}
\label{sec:framework}


The three-layer framework, illustrated in Figure~\ref{fig:three-layer}, instantiates the contract-based design of
Section~\ref{sec:contracts}. 
A plan~$p$ passes sequentially through three assurance barriers before and during execution.  Each barrier $i \in \{U, O, F\}$ carries a probabilistic assume--guarantee (A/G) contract
$\Sigma_i = (A_i, \Gamma_i)$: if the environment satisfies assumption~$A_i$, the layer guarantees property~$\Gamma_i$ with probability~$p_i = \Pr(\Gamma_i)$~\cite{delahaye2011probabilistic}.\martinComment{I am afraid that in AI practice, we can't really apply the probabilistic contract theory of Delahaye, Caillaud, and Legay  \cite{delahaye2011probabilistic}, unless we accept trivial/vacuous and thus perfectly useless bounds on contract satisfaction probabilities, namely a satisfaction probability of 0. The problem is that the input to, e.g., an DNN is open, i.e., a non-deterministic choice. Delahay et al.'s notion of contract satisfaction takes infimum over all schedulers (i.e., over the non-deterministic NN inputs in our case) of the probability of contract satisfaction \cite[Def.~7]{delahaye2011probabilistic}. With the DNN being epistemically, not aleatory, uncertain, this probability infimum will be 0 if there just is a single input that the NN misperforms (in the sense of providing wrong output) on. Ouch! 

N.B.: This does not mean that the theory of \cite{delahaye2011probabilistic} is flawed; the theory is perfectly sound. But it does not buy us anything in our context, as virtually all probability-of-satisfaction bounds we can confirm for our components will be 0.}
\martinComment{Note that the apparent alternative, namely to treat inputs as probabilistic rather than nondeterministic choices, in practice is none: This would fix the input distribution such that contract-based reasoning does not carry over to even the slightest distribution shift --- we'd totally loose the benefit of contracts, namely that their reasoning applies to a variety of possible contexts, namely to all environments satisfying the assumption. 

Blends of nondeterministic and probabilistic choice could in principle alleviate this problem, but are hard to argue and instantiate from an application perspective; they'd be a coding trick whose details, including concrete parameters of the probability distributions, do not follow immediately from the application.}
\martinComment{I'd suggest to resolve or at least alleviate the issue by the following sentence.}
These probabilistic assume-guarantee frameworks or their refinements addressing conditional distributions \cite{BlohmFHKR24} facilitate the relevant modular reasoning.
The guarantees chain, derived from the temporal ordering as $\Gamma_U \Rightarrow A_O$ and $\Gamma_O \Rightarrow A_F$, enables a  modular decomposition of the system-level safety probability via the chain rule introduced in Section~\ref{sec:bounds}.

\noindent\textbf{Running example.}  A caregiver instructs a service robot (LLM-orchestrated):
\textit{``Visit Rooms~12, 15, and~18 before 15:00; check hydration
and wellbeing; bring water if needed; do not disturb during rest
periods.''} The LLM produces plan~$p$---a sequence of visit,
assessment, and conditional-delivery actions---which passes through
the three layers as developed below (see Figure~\ref{fig:three-layer}).


\subsection{The Three Layers}
\label{sec:layers}

\noindent\textbf{User Assurance Layer~($\Sigma_U$).}
The first barrier validates~$p$ against intent, policy, and ethics
\emph{before any world observation is made}, across three orthogonal
dimensions that together span the space of pre-execution
failures~\cite{li2025mind,shamsujjoha2024swiss}:
(i)~\emph{cognitive alignment}---does the plan faithfully realise the
user's intent, neither under- nor over-delivering?~\cite{wang2025agentspeccustomizableruntimeenforcement};
(ii)~\emph{regulation alignment}---does it conform to domain-specific
rules and policies?~\cite{kamath2025enforcingtemporalconstraintsllm,wang2025agentspeccustomizableruntimeenforcement};
and (iii)~\emph{ethical alignment}---does it respect ethical
constraints such as prohibiting cameras in private
spaces?~\cite{Bai2022ConstitutionalAH,chen2025shieldagent}.
Authorisation of the issuing user is verified in parallel.  A plan
failing any check is blocked or, where recoverable, returned for
clarification.

Beyond gating, the layer derives quantitative constraints---bounds 
$\text{caps}_U$ on continuous variables such as speed, force, and proximity, 
and exclusions $\text{zones}_U$ over spatial regions, time windows, and 
task types---from user roles and ethical rules, passing them downstream.  Assuming $A_U$ (authorised request), the
layer guarantees $\Gamma_U$: the emitted plan targets only authorised
locations, encodes conditional actions correctly, and contains no
policy or ethical violations; $\Gamma_U$ constitutes ~$A_O$.

\textit{Example.}  The layer rejects plans omitting the conditional
water-delivery logic (cognitive), scheduling room entry during rest
hours without override (regulation), or routing through a bathroom
with cameras active (ethical).

Key open problems include automated formalisation of informal intent and
domain policies into verifiable representations---initial attempts in 
ShieldAgent~\cite{chen2025shieldagent} and
Agent-C~\cite{kamath2025enforcingtemporalconstraintsllm} but not yet closed.

\noindent\textbf{Operational Assurance Layer~($\Sigma_O$).}
The second barrier asks not ``what should the agent do?'' but 
“is this a world in which we are certified to act, and under what degree
of autonomy?” Let $\mathcal{W}$ denote the space of all possible world states
reachable by the agent during deployment.
The Operational Design Domain (ODD) is a certified subset
$\mathcal{W}_{\mathrm{ODD}} \subseteq \mathcal{W}$ encoding five independently certifiable
dimensions: physical configuration (geometry, obstacles, dynamics),
perceptual configuration (sensor availability, visibility),
contextual configuration (time of day, occupancy, access restrictions),
governance configuration (authorisation regimes, policy zones),
and social or normative context (restricted roles, privacy constraints).
The first three follow established ODD formalisms~\cite{iso34503}; the latter two
extend them to the institutional embedding of LLM agents.
Execution outside $\mathcal{W}_{\mathrm{ODD}}$ invalidates all downstream guarantees
regardless of physical capability.

Beyond ODD membership, the Operational Layer governs the agent’s
\emph{autonomy envelope}: the degree of freedom under which the plan
may be executed in the current operational context.
We model this envelope as
$\mathcal{E}(\hat{w}) = (L, \Pi, H)$,
where $L$ denotes the autonomy level, $\Pi$ the permitted action and
tool set, and $H$ the maximum execution horizon before checkpoint.
While ODD validity determines whether execution is authorised at all,
the autonomy envelope determines how much autonomy is granted
within the valid domain.

The layer ingests sensor data~$\xi$, computes world-state
estimate~$\hat{w}$, and issues a deterministic validity verdict.
Determinism preserves audit trails and compositional
invariants~\cite{delahaye2011probabilistic}; probabilistic uncertainty
is handled upstream, but the verdict itself is binary. Note that, while the  verdict is binary for every input, a probabilistic guarantee is needed when dealing with a population of inputs.  When the ODD
is invalid or degraded, a fallback policy triggers plan restriction or
human takeover.  
Assuming $A_O$ (i.e., $\Gamma_U$ holds), the layer guarantees $\Gamma_O$:
(i) execution remains within $\mathcal{W}_{\text{ODD}}$,
(ii) all ODD-invalid sub-plans are blocked, and
(iii) autonomy-envelope updates are conservative with respect to
operational degradation (i.e., autonomy reduction, restriction of
permitted actions, or activation of a Minimal Risk Condition (MRC)~\cite{SAE_J3016_2021} when required).
This guarantee constitutes $A_F$ for the Functional Layer.

\textit{Example.}  Room~15 is restricted (clinical procedure): that
sub-plan is blocked.  The corridor to Room~12 is under-lit but within
the ODD subject to~$v \leq v_{\max}$.

Key open problems include maintaining guarantees under distributional
shift: when $\hat{w}$ drifts from the distribution under which
$\mathcal{W}_{\mathrm{ODD}}$ was certified,  methods such as~\cite{zhao2024robust} may be needed.

\noindent\textbf{Functional Assurance Layer~($\Sigma_F$).}
The third barrier enforces \emph{how} the system executes at
control-loop frequency.  Three complementary mechanisms share the
load.  \emph{Specification-based runtime
monitoring}~\cite{DBLP:series/lncs/BartocciDDFMNS18} tracks the trajectory
against safety specifications, producing robustness margins
$\rho_F(t)$---the signal temporal logic (STL)
robustness degree~\cite{DBLP:series/lncs/BartocciDDFMNS18} of the current trajectory against the
safety specification---and triggering intervention when a violation is detected
or anticipated.  \emph{Control barrier functions
(CBFs)}~\cite{DBLP:conf/eucc/AmesCENST19} project any candidate control
command~$u_{\mathrm{des}}(t)$ onto the admissible safe set, correcting
it minimally; recent work confirms CBF quadratic programs embed inside
LLM planning loops without sacrificing task
completion~\cite{khan2025safer,zhao2026sagellmsafegeneralizablellm}, and probabilistic
extensions now bound safety failure probability for learned
barriers~\cite{urrea2026probabilistic,mestres2025probabilistic}.
\emph{Simulation-based
synthesis}~\cite{DBLP:conf/aaai/AlshiekhBEKNT18,DBLP:conf/ijcai/YangMRR23} pre-computes safe
envelopes via world models when the environment is large or unknown;
CBF projection takes precedence when the two conflict, as it provides
a certified instantaneous correction independent of model accuracy.

Assuming $A_F$ ($\Gamma_O$ holds: $w \in \mathcal{W}_{\mathrm{ODD}}$,
$v \leq v_{\max}$), the layer guarantees $\Gamma_F$: no collision
occurs, all physical interactions satisfy $f \leq f_{\max}$, and any
violation triggers corrective action or safe-stop.  Safety signals $s_i$ are structured status reports emitted by each layer upon 
constraint violation or execution failure, indicating whether operation is 
nominal or requires upstream intervention. They propagate bottom-up: the functional safety signal~$s_F$ is reported to
the Operational Layer, which propagates~$s_O$ to the User
Layer~\cite{giannakopoulou2018compositional}, enabling plan
recomputation when execution cannot complete safely.  The contract
chain is thus a live, \emph{bidirectional assurance loop}.

\textit{Example.}  A resident entering the corridor triggers a hold; a
CBF constrains arm force during the handover; rearranged furniture in
Room~18 causes escalation with an updated obstacle map, exercising the
bottom-up channel.

Key open problems include real-time safety under non-stationary
dynamics~\cite{DBLP:series/lncs/BartocciDDFMNS18} and mid-execution adaptation
of safety envelopes when updated caps or zones arrive from upper
layers.
\subsection{End-to-End Safety Guarantee}
\label{sec:bounds}
\vspace{-5pt}
Since $\Gamma_U \Rightarrow A_O$ and $\Gamma_O \Rightarrow A_F$, the
three contracts compose sequentially~\cite{delahaye2011probabilistic,giannakopoulou2018compositional}. 
Let $F_i = \neg\Gamma_i$ be the failure event of layer~$i$.  
The system-level safety probability $\Pr(\text{system safe}) = \Pr(\Gamma_U \cap \Gamma_O \cap \Gamma_F)$: admits four characterisations of increasing precision:

\begin{align}
  \Pr(\text{safe}) &\geq \max(0,\; p_U + p_O + p_F - 2)
    \tag{B1}\label{eq:B1}\\
  \Pr(\text{safe}) &\geq \max(0,\; p_U + p_O + p_F - 2
    + \textstyle\sum_{i<j}\Pr(F_i\cap F_j))
    \tag{B2}\label{eq:B2}\\
  \Pr(\text{safe}) &= 1 - \Pr(F_U\cup F_O\cup F_F)
    \tag{B3}\label{eq:B3}\\
  \Pr(\text{safe}) &= p_U \cdot p_{O|U} \cdot p_{F|OU}
    \tag{B4}\label{eq:B4}
\end{align}

(B1) provides a conservative lower bound from marginals alone, while (B2) incorporates pairwise co-failure probabilities. (B3) is exact via inclusion–exclusion. (B4), derived from the chain rule, is the most informative form, as its factors align with the architectural stages and correspond to layer-wise conditional guarantees.

These bounds do not assume independence. Instead, they reflect different levels of available statistical information, from marginals to conditional probabilities. The architecture’s key contribution is to make the factors of (B4) well-defined and independently estimable by aligning them with the sequential availability of information across layers.

Any weakening of any single layer degrades all four bounds; no strengthening of one compensates for a gap in another~\cite{delahaye2011probabilistic}. Appendix~\ref{app:numerical} instantiates these bounds on the running example with illustrative estimates; Proposition~\ref{prop:bound-degradation} in Appendix~\ref{sec:bound-degradation} establishes that no genuine two-stage design yields \eqref{eq:B4} as a product of independently estimable, auditable layer-level quantities. The bounds are further elaborated in Appendix~A.

%

The running example instantiates the contract chain $\Gamma_U$ delivers $A_O$, 
$\Gamma_O$ delivers $A_F$, and $\Gamma_F$ .The Room~18 escalation illustrates 
the architecture's bidirectional character: the Operational Layer reissues 
revised constraints on a plan already certified by $\Sigma_U$, and the 
Functional Layer operates on an execution context already authorised by both 
upper layers. Whether these upstream certifications translate into quantitative 
improvements ($p_{O|U} > p_O$, $p_{F|OU} > p_F$) is deployment-dependent; 
what the architecture guarantees is that these conditionals are well-defined, 
independently estimable, and auditable---properties that any two-stage collapse 
would forfeit (Proposition~\ref{prop:bound-degradation}).

\section{Limitations and Future Directions}\label{sec:limitations}






\noindent\textbf{From A/G Guarantee to Estimable Bounds}\label{sec:pac-guarantees}
The framework treats the LLM as a fixed black box, which has a direct consequence for certification: if the underlying model is updated or replaced, the contract structure carries over but the layer-level probabilities $p_U, p_O, p_F$ and their conditionals must be re-derived from fresh execution traces, since contract satisfaction rates will generally change. More fundamentally, even for a fixed model, the bounds \eqref{eq:B1}--\eqref{eq:B4} require estimating the marginal probabilities $p_U, p_O, p_F$; the pairwise co-failure rates $\Pr(F_i \cap F_j)$; and the conditionals $p_{O|U}, p_{F|OU}$. None of these is analytically available for LLM agents, because token sampling introduces
non-stationarity---identical instructions may yield different plans across invocations---making
standard i.i.d.\ assumptions untenable for bound estimation. The numerical instantiation of Appendix~\ref{app:numerical}
  illustrates what is at stake: a 4-point estimation error in
   $p_{O|U}$ propagates directly to the (B4) certificate,
   motivating tight rather than merely valid bounds. Standard PAC theory~\cite{valiant1984} is therefore not directly applicable, for three compounding reasons.

The first is \emph{non-i.i.d.\ traces}. Each step in an LLM agent trace conditions on prior context, violating the independence assumption that PAC bounds require. For the marginal estimates required by~\eqref{eq:B1}, martingale bounds~\cite{lotfi2024tokens} and mixing-process bounds~\cite{mohri2009,
mohri2010} offer the most applicable partial remedies, tolerating within-sequence dependence without stationarity assumptions. For the conditional quantities required by~\eqref{eq:B4}, non-exchangeable conformal prediction~\cite{barber2023} and the survival-analysis calibration \cite{davidov2026}---which reframes safety-probability estimation as a time-to-unsafe-sampling problem and constructs distribution-free lower predictive bounds---are better suited. Anytime-valid inference via $e$-processes~\cite{ramdas2023} applies across all quantities but yields wider intervals. None of these frameworks has yet been extended to variable-length, layer-level satisfaction events across heterogeneous layers, and ScenicProver~\cite{vin2025scenicprover} offers the closest available blueprint, though extension to non-stationary token-sampling remains open.

The second obstacle is \emph{compositional interval width}. Composing interval estimates $[\hat{p}_i \pm \varepsilon_i]$ across layers produces a bound whose width grows with each additional layer and whose coverage depends on the joint distribution of estimation errors---a problem neither the probabilistic contract literature~\cite{delahaye2011probabilistic,hampus2024probabilistic} nor \mbox{PRO\textsuperscript{2}GUARD}~\cite{wang2025proguard} resolves across layers. PAC-Bayes bounds~\cite{mcallester1999pac} treating $\delta$ as uncertain, combined with dependency-graph decompositions~\cite{ralaivola2009} exploiting structured inter-layer dependence, offer a promising direction but have not been applied in this setting.

The third obstacle is \emph{correlated backbone failures}. When a single LLM backbone underlies multiple layers, a systematic model failure---distributional shift, adversarial prompt, or a hallucination regime~\cite{LLMtrustworthySurvey}---may cause all the relevant layers to fail simultaneously. Under positive correlation, assuming independence overestimates the joint failure probability, so all bounds become pessimistic: valid but potentially too conservative to certify a system that genuinely meets its safety target. The open problem is therefore on the \emph{tightness} rather than correctness. Two partial architectural strategies provide partial decoupling: deriving the Functional Layer's CBF certificate from a dynamics model that does not share parameters with the LLM reasoning engine~\cite{pandya2025}, and instantiating distinct LLMs at different layers---though residual correlation persists in the latter case, since frontier LLMs share substantial pretraining data and all layers process the same plan~$p$. Formalising partial decoupling as a quantitative bound on cross-layer correlation is the work required to tighten the system-level certificate to the point where it supports a practical deployment decision.

\textbf{Graceful Degradation of Contracts Under Deployment Drift}\label{sec:adaptivity}

The bounds \eqref{eq:B1}--\eqref{eq:B4} are meaningful only if the contracts 
$\Sigma_i = (A_i, \Gamma_i)$ remain stable under realistic deployment perturbations. 
In practice, this stability is challenged both by violations of assumptions at the 
semantic level and by distributional drift at runtime. We argue that these seemingly 
distinct failure modes admit a common treatment through \emph{robust contracts} that 
enable controlled, quantitative degradation of guarantees.

In classical assume/guarantee (A/G) reasoning, guarantees hold only under strict 
satisfaction of assumptions: any violation of $A_i$ renders $\Gamma_i$ vacuous. 
This binary semantics is ill-suited for LLM-based systems, where inputs and plans 
are inherently approximate and semantically structured. We therefore propose a 
robust contract formulation in which assumptions may be violated up to a tolerance 
$\varepsilon$, resulting in an at most proportional degradation of the 
guarantee $\Gamma_i$. This replaces brittle logical validity and enables \emph{graceful degradation} across all architectural layers.

This perspective directly addresses \emph{semantic non-robustness}, albeit currently only for applications domains where the space of observed behaviours can be equipped with an adequate metric quantifying distance between behaviours. Conceptually, this metric is used for measuring distance between an undesired and a set of desired behaviours, as in robust variants of temporal logic \cite{DonzeMaler10,FraenzleHansen05}. An instantiation of such metric approaches to the rich semantic spaces manipulated by LLMs is missing hitherto, and it certainly requires more fundamental semantic concepts than logics talking about the inherently metric spaces of time and scalar signal values can currently offer. In LLM-based 
pipelines, small perturbations at the surface level may leave semantics unchanged, 
while syntactically similar inputs may differ substantially in meaning. 
Existing robustness techniques such as SmoothLLM~\cite{robey2025smoothllm} 
and CluCERT~\cite{wang2025clucert} provide certified radii for token-level noise, but 
do not capture semantic equivalence at the plan level required for contract stability. 
Within a robust contract framework, such mismatches can be absorbed as bounded 
assumption violations, allowing guarantees to degrade proportionally rather than 
fail catastrophically. Initial steps in this direction are taken by the Agent 
Behavioral Contracts framework~\cite{bhardwaj2026abc}, which treats soft violations 
as early indicators of behavioral drift, though a quantitative robustness theorem 
remains open.

Crucially, the same mechanism  provides a principled response to 
\emph{distributional drift at runtime}. We distinguish between environmental drift, 
where the operational distribution diverges from the certified 
$\mathcal{W}_{\mathrm{ODD}}$, and generative drift, where LLM stochasticity produces 
previously unseen plan structures. In both cases, the underlying issue can be 
interpreted as a violation of layer assumptions $A_i$. 
Robust contracts allow such deviations to be captured quantitatively: instead of 
invalidating the guarantees entirely, the system tracks how far the observed behavior 
departs from the certified assumption space and adjusts $\Gamma_i$ accordingly.
This unified view does not eliminate the need for detection and re-certification. 
Environmental drift may still cause hard violations (e.g., leaving 
$\mathcal{W}_{\mathrm{ODD}}$), making ODD monitoring~\cite{ren2023outofdistribution,zhang2025your} 
a prerequisite for safe operation. Similarly, the estimates 
$p_U, p_{O|U}, p_{F|OU}$ must be revalidated under sustained drift. However, robust 
contracts ensure that, prior to re-certification, the system degrades in a controlled 
and interpretable manner rather than failing abruptly.

Fully realizing this approach requires capabilities that are currently missing for 
LLM agents: (i) automated inference of assumption relaxations from observed violations 
and feedback, and (ii) principled incremental updates of contracts and compositional 
bounds under changing assumptions. The anytime-valid inference framework~\cite{ramdas2023} 
is a promising foundation for such incremental re-certification, as its confidence 
sequences remain valid under continuous data collection, though its integration into 
multi-layer contract systems remains an open problem.




\noindent\textbf{Extension to Multi-Agent Settings} 
The framework is restricted to single-agent architectures by construction. In multi-agent deployments, an additional class of unsafe behaviour emerges that no single-agent layer addresses: \emph{cross-agent belief manipulation}, where an agent's reasoning is corrupted by peer-generated content without any single instruction being overtly malicious. This can occur through faulty instructions that exploit implicit inter-agent trust~\cite{darkside2025}, through coordinated factual fragments that steer agents toward false beliefs via their own reasoning tendencies~\cite{hu2026lying}, or through accumulated reasoning misalignment as chain-of-thought traces diverge from human-preferred reasoning paths with depth~\cite{wang2026cot}---and the literature on such vectors is growing. All three vectors share a common structural cause: the $\Gamma_U$ guarantee certifies semantic authorisation against a single human principal but provides no certification over the content or intent of inter-agent messages.

A multi-agent extension would need to treat inter-agent messages as a fourth information domain $\mathcal{I}_M$, introducing a \emph{provenance layer} that validates the full authorisation chain of incoming instructions back to a verified human principal. This interacts non-trivially with the system-level bounds of Section~\ref{sec:bounds}, since co-failure correlation between agents sharing infrastructure would enter the bound. MAST~\cite{cemri2025mast} identifies 14 failure modes across system design, inter-agent misalignment, and task verification that such an extension would need to address. We regard this as the highest-priority structural extension once the single-agent open problems above are resolved.

\section{Alternative Views}
\label{sec:alternatives}
\noindent\textbf{Learned end-to-end safety.}
An alternative is to learn safety properties directly 
(e.g., RLHF~\cite{NIPS2017_d5e2c0ad}, Constitutional AI~\cite{Bai2022ConstitutionalAH}, or learned probabilistic models such as \textsc{Pro\textsuperscript{2}Guard}~\cite{wang2025proguard}). We view these approaches as complementary: 
learning handles perceptual complexity, while contracts provide certification and 
compositional reasoning. Whether learning can replace this structure remains open; 
we argue certification constraints make this unlikely.

\noindent\textbf{Probabilistic vs.\ boolean guarantees.}
Boolean certification is preferable where achievable. At the system level, however, probabilistic contracts are the correct target on two independent grounds: LLM non-determinism makes boolean system-level certification ill-defined, and the deployment environment is itself irreducibly probabilistic---sensor noise, partial observability, and non-stationary dynamics mean that boolean guarantees derived from formal models degrade under real-world conditions regardless of the underlying model~\cite{li2025systematizing}. Specifically, the User Layer's interaction with the LLM is inherently non-deterministic, so alignment assurance can only be assessed probabilistically, while the Operational and Functional Layers focus respectively on system autonomy and safe action execution, both tightly coupled to an environment characterised by imperfect observation---partial observability, unreliable sensors, and imperfect perception---each introducing sources of uncertainty that must likewise be assessed probabilistically.

\noindent\textbf{Latency and deployability.}
Sequential certification across three layers imposes cumulative latency potentially incompatible with fast control loops.
We scope the framework to deliberative agents where planning horizons exceed certification overhead, and note that partial parallelisation across layers is possible; full latency characterisation remains future work.

\noindent\textbf{Correctness vs. optimality.} The architecture certifies safety but says nothing about task performance: a maximally conservative layer could certifiably block all actions. The tension between safety guarantees and task utility is not addressed here; VeriGuard \cite{miculicich2025veriguardenhancingllmagent} demonstrates empirically that near-zero attack success rate can be achieved without substantially degrading task success rate, suggesting the tension may be practically manageable, but no theoretical treatment of this tradeoff within the A/G contract framework currently exists.

\section{Conclusions}\label{sec:conclusions}

The three-layer architecture proposed here is the minimum viable certification structure that follows from this, and whether it becomes a deployable standard depends on the community treating the open problems of Section~\ref{sec:limitations} not as limitations to acknowledge but as the primary research agenda to pursue: the field is currently producing safety mechanisms faster than the theoretical infrastructure to compose and certify them. This asymmetry is not unusual in the early stages of a safety-critical engineering discipline, such as aviation and automotive. The difference is that LLM agents are being deployed now, in safety-critical settings, before that theory exists.


\bibliographystyle{abbrv}
\bibliography{references}

@article{ravichandran2026safety,
  title={Safety guardrails for LLM-enabled robots},
  author={Ravichandran, Zachary and Robey, Alexander and Kumar, Vijay and Pappas, George J and Hassani, Hamed},
  journal={IEEE Robotics and Automation Letters},
  year={2026},
  publisher={IEEE}
}

@inproceedings{DBLP:conf/acsd/DelahayeCL10,
  author       = {Beno{\^{\i}}t Delahaye and
                  Beno{\^{\i}}t Caillaud and
                  Axel Legay},
  editor       = {Lu{\'{\i}}s Gomes and
                  Victor Khomenko and
                  Jo{\~{a}}o M. Fernandes},
  title        = {Probabilistic Contracts: {A} Compositional Reasoning Methodology for
                  the Design of Stochastic Systems},
  booktitle    = {10th International Conference on Application of Concurrency to System
                  Design, {ACSD} 2010, Braga, Portugal, 21-25 June 2010},
  pages        = {223--232},
  publisher    = {{IEEE} Computer Society},
  year         = {2010}
}

@article{li2025systematizing,
  author    = {Chengyu Li and Saleh Faghfoorian and Ivan Ruchkin},
  title     = {What Does It Take to Get Guarantees? {S}ystematizing
               Assumptions in Cyber-Physical Systems},
  journal   = {arXiv preprint arXiv:2511.15952},
  year      = {2025},
}

@inproceedings{hampus2024probabilistic,
  author    = {Anton Hampus and Martin Nyberg},
  title     = {A Theory of Probabilistic Contracts},
  booktitle = {Leveraging Applications of Formal Methods,
               Verification and Validation. Specification and
               Verification ({ISoLA} 2024)},
  series    = {Lecture Notes in Computer Science},
  volume    = {15221},
  pages     = {296--319},
  publisher = {Springer},
  year      = {2024},
}

@inproceedings{yang2024plug,
  title={Plug in the safety chip: Enforcing constraints for llm-driven robot agents},
  author={Yang, Ziyi and Raman, Shreyas S and Shah, Ankit and Tellex, Stefanie},
  booktitle={2024 IEEE International Conference on Robotics and Automation (ICRA)},
  pages={14435--14442},
  year={2024},
  organization={IEEE}
}

@misc{salako2025frechet,
  author       = {Kizito Salako},
  title        = {Constructive Proofs of Generalized
                  {Boole--Fr\'{e}chet} Bounds:
                  A Dynamic Programming Approach},
  year         = {2025},
  eprint       = {2512.09161},
  archivePrefix= {arXiv},
  primaryClass = {math.PR},
  note         = {arXiv:2512.09161 [math.PR], 9 December 2025}
}

@inproceedings{lei2026offtopiceval,
  author    = {Sirine Lei and others},
  title     = {{OffTopicEval}: When Large Language Models Enter
               the Wrong Chat, Almost Always!},
  booktitle = {Proceedings of the 14th International Conference
               on Learning Representations (ICLR)},
  year      = {2026},
  note      = {arXiv:2509.26495}
}

@inproceedings{pandya2025,
  author  = {Ravi Pandya},
  title   = {Influence-Aware Safety for Human-Robot Interaction},
  school  = {Carnegie Mellon University, Robotics Institute},
  year    = {2025},
  type    = {{PhD} thesis},
  address = {Pittsburgh, PA},
  month   = oct,
  note    = {CMU-RI-TR-25-95}
}

@inproceedings{wang2026cot,
  author    = {Boxuan Wang and Zhuoyun Li and Xinmiao Huang and
               Xiaowei Huang and Yi Dong},
  title     = {Chain-of-Thought as a Lens: Evaluating Structured Reasoning
               Alignment between Human Preferences and Large Language Models},
  booktitle = {Proceedings of the 64th Annual Meeting of the
               Association for Computational Linguistics (ACL 2026)},
  year      = {2026}
}

@inproceedings{hu2026lying,
  author    = {Jinwei Hu and Xinmiao Huang and Youcheng Sun and
               Yi Dong and Xiaowei Huang},
  title     = {Lying with Truths: Open-Channel Multi-Agent Collusion
               for Belief Manipulation via Generative Montage},
  booktitle = {Proceedings of the 64th Annual Meeting of the
               Association for Computational Linguistics (ACL 2026)},
  year      = {2026}
}

@article{darkside2025,
  author    = {Matteo Lupinacci and
               Francesco Aurelio Pironti and
               Francesco Blefari and
               Francesco Romeo and
               Luigi Arena and
               Angelo Furfaro},
  title     = {The Dark Side of {LLM}s: Agent-based Attacks for Complete
               Computer Takeover},
  journal   = {arXiv preprint arXiv:2507.06850},
  year      = {2025},
  note      = {v5, revised November 2025}
}

@inproceedings{dong2024guardrails,
  title     = {Position: {B}uilding Guardrails for Large Language 
               Models Requires Systematic Design},
  author    = {Dong, Yi and Mu, Ronghui and Jin, Gaojie and Qi, Yi 
               and Hu, Jinwei and Zhao, Xingyu and Meng, Jie and 
               Ruan, Wenjie and Huang, Xiaowei},
  booktitle = {Proceedings of the 41st International Conference on 
               Machine Learning},
  series    = {Proceedings of Machine Learning Research},
  volume    = {235},
  pages     = {11475--11492},
  publisher = {PMLR},
  year      = {2024}
}

@inproceedings{zhang2025your,
title={Your Finetuned Large Language Model is Already a Powerful Out-of-distribution Detector},
author={Andi Zhang and Tim Z. Xiao and Weiyang Liu and Robert Bamler and Damon Wischik},
booktitle={The 28th International Conference on Artificial Intelligence and Statistics},
year={2025},
url={https://openreview.net/forum?id=QW3iLIr3GD}
}

@inproceedings{ren2023outofdistribution,
  author    = {Ren, Jie and Luo, Jiaming and Zhao, Yao and Krishna,
               Kundan and Saleh, Mohammad and Lakshminarayanan,
               Balaji and Liu, Peter J.},
  title     = {Out-of-Distribution Detection and Selective Generation
               for Conditional Language Models},
  booktitle = {The Eleventh International Conference on Learning
               Representations ({ICLR})},
  year      = {2023}
}

@inproceedings{huang2023inner,
  title={Inner Monologue: Embodied Reasoning through Planning with Language Models},
  author={Huang, Wenlong and Xia, Fei and Xiao, Ted and Chan, Harris and Liang, Jacky and Florence, Pete and Zeng, Andy and Tompson, Jonathan and Mordatch, Igor and Chebotar, Yevgen and others},
  booktitle={Conference on Robot Learning},
  pages={1769--1782},
  year={2023},
  organization={PMLR}
}

@article{delahaye2011probabilistic,
  title={Probabilistic contracts: a compositional reasoning methodology for the design of systems with stochastic and/or non-deterministic aspects},
  author={Delahaye, Beno{\^\i}t and Caillaud, Beno{\^\i}t and Legay, Axel},
  journal={Formal Methods in System Design},
  volume={38},
  number={1},
  pages={1--32},
  year={2011},
  publisher={Springer}
}

@inproceedings{DBLP:conf/eucc/AmesCENST19,
  author       = {Aaron D. Ames and
                  Samuel Coogan and
                  Magnus Egerstedt and
                  Gennaro Notomista and
                  Koushil Sreenath and
                  Paulo Tabuada},
  title        = {Control Barrier Functions: Theory and Applications},
  booktitle    = {17th European Control Conference, {ECC} 2019, Naples, Italy, June
                  25-28, 2019},
  pages        = {3420--3431},
  publisher    = {{IEEE}},
  year         = {2019},
  url          = {https://doi.org/10.23919/ECC.2019.8796030},
  doi          = {10.23919/ECC.2019.8796030},
  bibsource    = {dblp computer science bibliography, https://dblp.org}
}

@article{khan2025safer,
  author    = {Khan, Azal Ahmad and
               Andrev, Michael and
               Murtaza, Muhammad Ali and
               Aguilera, Sergio and
               Zhang, Rui and
               Ding, Jie and
               Hutchinson, Seth and
               Anwar, Ali},
  title     = {Safety Aware Task Planning via Large Language Models in Robotics},
  journal   = {arXiv preprint arXiv:2503.15707},
  year      = {2025},
  url       = {https://arxiv.org/abs/2503.15707},
  note      = {cs.RO}
}

@article{wang2025robosafe,
  author    = {Wang, Le and
               Ying, Zonghao and
               Yang, Xiao and
               Zou, Quanchen and
               Yin, Zhenfei and
               Li, Tianlin and
               Yang, Jian and
               Yang, Yaodong and
               Liu, Aishan and
               Liu, Xianglong},
  title     = {{RoboSafe}: Safeguarding Embodied Agents via Executable Safety Logic},
  journal   = {arXiv preprint arXiv:2512.21220},
  year      = {2025},
  url       = {https://arxiv.org/abs/2512.21220},
  note      = {cs.AI, cs.CV, cs.RO}
}

@article{bhardwaj2026abc,
  author    = {Bhardwaj, Varun Pratap},
  title     = {Agent Behavioral Contracts: Formal Specification and Runtime
               Enforcement for Reliable Autonomous {AI} Agents},
  journal   = {arXiv preprint arXiv:2602.22302},
  year      = {2026},
  url       = {https://arxiv.org/abs/2602.22302},
  note      = {cs.AI}
}

@article{vin2025scenicprover,
  author    = {Vin, Eric and
               Miller, Kyle A. and
               Incer, Inigo and
               Seshia, Sanjit A. and
               Fremont, Daniel J.},
  title     = {{ScenicProver}: A Framework for Compositional Probabilistic
               Verification of Learning-Enabled Systems},
  journal   = {arXiv preprint arXiv:2511.02164},
  year      = {2025},
  url       = {https://arxiv.org/abs/2511.02164},
  note      = {Full version of a paper submitted to TACAS 2026. cs.LO, cs.AI, cs.LG}
}

@inproceedings{mohri2009,
  author    = {Mohri, Mehryar and Rostamizadeh, Afshin},
  title     = {Rademacher Complexity Bounds for Non-{I.I.D.} Processes},
  booktitle = {Advances in Neural Information Processing Systems},
  volume    = {21},
  year      = {2009}
}

@article{ramdas2023,
  author  = {Ramdas, Aaditya and Gr\"{u}nwald, Peter and Vovk, Vladimir and Shafer, Glenn},
  title   = {Game-Theoretic Statistics and Safe Anytime-Valid Inference},
  journal = {Statistical Science},
  volume  = {38},
  number  = {4},
  pages   = {576--601},
  year    = {2023}
}

@inproceedings{ralaivola2009,
  author    = {Ralaivola, Liva and Szafranski, Marie and Stempfel, Guillaume},
  title     = {Chromatic {PAC}-{B}ayes Bounds for Non-{IID} Data},
  booktitle = {Proceedings of the Twelfth International Conference on
               Artificial Intelligence and Statistics ({AISTATS})},
  pages     = {416--423},
  year      = {2009}
}

@article{barber2023,
  author  = {Barber, Rina Foygel and Cand{\`e}s, Emmanuel J. and
             Ramdas, Aaditya and Tibshirani, Ryan J.},
  title   = {Conformal Prediction Beyond Exchangeability},
  journal = {Annals of Statistics},
  volume  = {51},
  number  = {2},
  pages   = {816--845},
  year    = {2023},
  doi     = {10.1214/23-AOS2276}
}

@article{mohri2010,
  author  = {Mohri, Mehryar and Rostamizadeh, Afshin},
  title   = {Stability Bounds for Stationary $\phi$-mixing and
             $\beta$-mixing Processes},
  journal = {Journal of Machine Learning Research},
  volume  = {11},
  pages   = {789--814},
  year    = {2010}
}

@article{davidov2026,
  author = {Davidov, Hen and others},
  title  = {Calibrated Predictive Lower Bounds on Time-to-Unsafe-Sampling in {LLMs}},
  note   = {arXiv preprint arXiv:2506.13593},
  year   = {2026}
}

@article{frechet1935,
  author  = {Fr{\'e}chet, Maurice},
  title   = {G{\'e}n{\'e}ralisations du th{\'e}or{\`e}me des probabilit{\'e}s totales},
  journal = {Fundamenta Mathematicae},
  volume  = {25}, pages = {379--387}, year = {1935}
}

@article{kwerel1975,
  author  = {Kwerel, Seymour M.},
  title   = {Bounds on the probability of the union and intersection of $m$ events},
  journal = {Advances in Applied Probability},
  volume  = {7}, number = {2}, pages = {431--448}, year = {1975}
}

@article{nother2025badacts,
  title={Benchmarking the Robustness of Agentic Systems to Adversarially-Induced Harms},
  author={N{\"o}ther, Jonathan and Singla, Adish and Radanovic, Goran},
  journal={arXiv preprint arXiv:2508.16481},
  year={2025},
  url={https://arxiv.org/abs/2508.16481}
}

@book{halpern2016actual,
    author    = {Halpern, Joseph Y.},
    title     = {Actual Causality},
    publisher = {MIT Press},
    year      = {2016},
    address   = {Cambridge, MA},
    isbn      = {9780262537131}
  }

@inproceedings{cemri2025mast,
  author    = {Cemri, Mert and Pan, Melissa Z. and Yang, Shuyi and
               Agrawal, Lakshya A. and Chopra, Bhavya and
               Tiwari, Rishabh and Keutzer, Kurt and Parameswaran,
               Aditya and Klein, Dan and Ramchandran, Kannan and
               Zaharia, Matei and Gonzalez, Joseph E. and Stoica,
               Ion},
  title     = {Why Do Multi-Agent {LLM} Systems Fail?},
  booktitle = {Advances in Neural Information Processing Systems
               ({NeurIPS})},
  volume    = {38},
  note      = {Datasets and Benchmarks Track, Spotlight.
               arXiv:2503.13657},
  year      = {2025}
}

@article{andriushchenko2024agentharm,
  title={AgentHarm: A Benchmark for Measuring Harmfulness of LLM Agents},
  author={Andriushchenko, Maksym and Souly, Alexandra and Sezener, {A. Ercel} and Cubuk, {Ekin D.} and Prenger, Ryan and Rahtz, {Collier} and Steinhardt, Jacob and Kolter, {J. Zico} and Davies, Xander and others},
  journal={arXiv preprint arXiv:2410.09024},
  year={2024},
  url={https://arxiv.org/abs/2410.09024}
}

@article{wang2025proguard,
  author    = {Wang, Haoyu and
               Poskitt, Christopher M. and
               Sun, Jun and
               Wei, Jingyi},
  title     = {{Pro\textsuperscript{2}Guard}: Proactive Runtime Enforcement of
               {LLM} Agent Safety via Probabilistic Model Checking},
  journal   = {arXiv preprint arXiv:2508.00500},
  year      = {2025},
  url       = {https://arxiv.org/abs/2508.00500},
  note      = {cs.SE, cs.AI}
}

@inproceedings{robey2025smoothllm,
  author    = {Robey, Alexander and Wong, Eric and Hassani, Hamed
               and Pappas, George J.},
  title     = {{SmoothLLM}: Defending Large Language Models Against
               Jailbreaking Attacks},
  booktitle = {Proceedings of the 2025 {IEEE} Conference on Secure
               and Trustworthy Machine Learning ({SaTML})},
  pages     = {23--38},
  year      = {2025}
}

@inproceedings{wang2025clucert,
  author    = {Wang, Zixia and Cheng, Chih-Hong and Jin, Gaojie and others},
  title     = {{CluCERT}: Certifying {LLM} Robustness via Clustering-Guided Denoising
               Smoothing},
  booktitle = {Proceedings of the AAAI Conference on Artificial Intelligence},
  year      = {2025},
  note      = {arXiv:2512.08967},
}

@inproceedings{shamsujjoha2024swiss,
  author    = {Shamsujjoha, Md and Lu, Qinghua and Zhao, Dehai and Zhu, Liming},
  title     = {{Swiss Cheese Model for AI Safety: A Taxonomy and Reference
               Architecture for Multi-Layered Guardrails of Foundation Model
               Based Agents}},
  booktitle = {Proceedings of the 22nd {IEEE} International Conference on
               Software Architecture ({ICSA})},
  pages     = {37--48},
  year      = {2025},
}

@article{urrea2026probabilistic,
  author  = {Urrea, Claudio},
  title   = {Probabilistic Safety Guarantees for Learned Control Barrier
             Functions: {T}heory and Application to Multi-Objective
             {H}uman--{R}obot Collaborative Optimization},
  journal = {Mathematics},
  volume  = {14},
  number  = {3},
  pages   = {516},
  year    = {2026},
  doi     = {10.3390/math14030516},
}

@techreport{SAE_J3016_2021,
  author      = {{SAE International}},
  title       = {{J3016}: Taxonomy and Definitions for Terms Related to
                 Driving Automation Systems for On-Road Motor Vehicles},
  institution = {SAE International},
  year        = {2021},
  number      = {J3016\_202104},
  month       = apr,
  url         = {https://www.sae.org/standards/content/j3016_202104/}
}

@techreport{BSI_PAS1883_2020,
  author      = {{British Standards Institution}},
  title       = {{PAS~1883}: Operational Design Domain ({ODD}) Taxonomy
                 for an Automated Driving System ({ADS})---Specification},
  institution = {BSI},
  year        = {2020},
  month       = aug,
  url         = {https://www.bsigroup.com/en-GB/insights-and-media/insights/brochures/pas-1883-operational-design-domain-odd-taxonomy-for-ads-specification/}
}

@misc{mestres2025probabilistic,
  author        = {Mestres, Pol and others},
  title         = {Probabilistic Control Barrier Functions:
                   {S}afety in Probability for Discrete-Time
                   Stochastic Systems},
  year          = {2025},
  eprint        = {2510.01501},
  archivePrefix = {arXiv},
  primaryClass  = {eess.SY},
}

@standard{iso34503,
  author       = {{ISO}},
  title        = {{ISO~34503}: Road Vehicles---Taxonomy and Definitions
                  for Operational Design Domain for Automated Driving
                  Systems},
  institution  = {International Organization for Standardization},
  year         = {2023},
  type         = {Standard},
  address      = {Geneva, Switzerland}
}

@article{valiant1984,
  author    = {Valiant, Leslie G.},
  title     = {A Theory of the Learnable},
  journal   = {Communications of the ACM},
  volume    = {27},
  number    = {11},
  pages     = {1134--1142},
  year      = {1984},
  doi       = {10.1145/1968.1972}
}

@article{yin2024formal,
  author    = {Yin, Xiang and others},
  title     = {Formal Synthesis of Controllers for Safety-Critical Autonomous Systems:
               A Survey},
  journal   = {Annual Reviews in Control},
  year      = {2024},
  note      = {Available at \url{https://xiangyin.sjtu.edu.cn/Paper/24ARC.pdf}}
}

@article{clarke1994model,
  title     = {Model Checking and Abstraction},
  author    = {Clarke, Edmund M. and Grumberg, Orna and Long, David E.},
  journal   = {ACM Transactions on Programming Languages and Systems},
  volume    = {16},
  number    = {5},
  pages     = {1512--1542},
  year      = {1994},
  publisher = {ACM}
}

@incollection{giannakopoulou2018compositional,
  title     = {Compositional Reasoning},
  author    = {Giannakopoulou, Dimitra and Namjoshi, Kedar S. and 
               P\u{a}s\u{a}reanu, Corina S.},
  booktitle = {Handbook of Model Checking},
  pages     = {345--383},
  year      = {2018},
  publisher = {Springer},
  doi       = {10.1007/978-3-319-10575-8\_12}
}

@inproceedings{zhao2024robust,
  title     = {Robust Conformal Prediction for {STL} Runtime Verification under Distribution Shift},
  author    = {Yiqi Zhao and Bardh Hoxha and Georgios Fainekos and Jyotirmoy V. Deshmukh and Lars Lindemann},
  booktitle = {15th {ACM/IEEE} International Conference on Cyber-Physical Systems, {ICCPS} 2024, Hong Kong, May 13-16, 2024},
  pages     = {169--179},
  publisher = {{IEEE}},
  year      = {2024},
  doi       = {10.1109/ICCPS61052.2024.00022}
}

@inproceedings{DBLP:conf/ijcai/YangMRR23,
  author       = {Wen{-}Chi Yang and
                  Giuseppe Marra and
                  Gavin Rens and
                  Luc De Raedt},
  title        = {Safe Reinforcement Learning via Probabilistic Logic Shields},
  booktitle    = {Proceedings of the Thirty-Second International Joint Conference on
                  Artificial Intelligence, {IJCAI} 2023, 19th-25th August 2023, Macao,
                  SAR, China},
  pages        = {5739--5749},
  publisher    = {ijcai.org},
  year         = {2023},
  url          = {https://doi.org/10.24963/ijcai.2023/637},
  doi          = {10.24963/IJCAI.2023/637},
  bibsource    = {dblp computer science bibliography, https://dblp.org}
}

@inproceedings{DBLP:conf/aaai/AlshiekhBEKNT18,
  author       = {Mohammed Alshiekh and
                  Roderick Bloem and
                  R{\"{u}}diger Ehlers and
                  Bettina K{\"{o}}nighofer and
                  Scott Niekum and
                  Ufuk Topcu},
  editor       = {Sheila A. McIlraith and
                  Kilian Q. Weinberger},
  title        = {Safe Reinforcement Learning via Shielding},
  booktitle    = {Proceedings of the Thirty-Second {AAAI} Conference on Artificial Intelligence,
                  (AAAI-18), the 30th innovative Applications of Artificial Intelligence
                  (IAAI-18), and the 8th {AAAI} Symposium on Educational Advances in
                  Artificial Intelligence (EAAI-18), New Orleans, Louisiana, USA, February
                  2-7, 2018},
  pages        = {2669--2678},
  publisher    = {{AAAI} Press},
  year         = {2018},
  url          = {https://doi.org/10.1609/aaai.v32i1.11797},
  doi          = {10.1609/AAAI.V32I1.11797},
  bibsource    = {dblp computer science bibliography, https://dblp.org}
}

@article{zhang2024agentsafetybench,
  author    = {Zhexin Zhang and Shiyao Cui and Yida Lu and
               Jingzhuo Zhou and Junxiao Yang and Hongning Wang and
               Minlie Huang},
  title     = {{Agent-SafetyBench}: Evaluating the Safety of {LLM}
               Agents},
  journal   = {arXiv preprint arXiv:2412.14470},
  year      = {2024},
}

@article{li2025mind,
  author    = {Gan, Yong and Yang, Yue and Ma, Zihao and He, Peng and Zeng, Rui and
               Wang, Yue and Li, Qian and Zhou, Chao and Li, Shuang and Wang, Ting},
  title     = {Navigating the Risks: {A} Survey of Security, Privacy, and Ethics
               Threats in {LLM}-Based Agents},
  journal   = {arXiv preprint arXiv:2411.09523},
  year      = {2025}
}

@inproceedings{ouyang2022training,
  title={Training language models to follow instructions with human feedback},
  author={Ouyang, Long and Wu, Jeffrey and Jiang, Xu and Almeida, Diogo and Wainwright, Carroll and Mishkin, Pamela and Zhang, Chong and Agarwal, Sandhini and Slama, Katarina and Ray, Alex and others},
  booktitle={Advances in Neural Information Processing Systems (NeurIPS)},
  volume={35},
  pages={27730--27744},
  year={2022}
}

@inproceedings{lotfi2024tokens,
  author    = {Lotfi, Sanae and Kuang, Yilun and Amos, Brandon and
               Goldblum, Micah and Finzi, Marc and Wilson, Andrew Gordon},
  title     = {Unlocking Tokens as Data Points for Generalization Bounds
               on Larger Language Models},
  booktitle = {Advances in Neural Information Processing Systems 37
               ({NeurIPS} 2024)},
  year      = {2024},
  note      = {Spotlight presentation. arXiv:2407.18158},
}

@article{mcallester1999pac,
   author  = {David A. McAllester},
   title   = {PAC-Bayesian Model Averaging},
   journal = {Proceedings of the 12th Annual Conference on
              Computational Learning Theory (COLT)},
   pages   = {164--170},
   year    = {1999}
 }

@article{LLMtrustworthySurvey,
	author = {Huang, Xiaowei and Ruan, Wenjie and Huang, Wei and Jin, Gaojie and Dong, Yi and Wu, Changshun and Bensalem, Saddek and Mu, Ronghui and Qi, Yi and Zhao, Xingyu and Cai, Kaiwen and Zhang, Yanghao and Wu, Sihao and Xu, Peipei and Wu, Dengyu and Freitas, Andre and Mustafa, Mustafa A.},
	date = {2024/06/17},
	doi = {10.1007/s10462-024-10824-0},
	id = {Huang2024},
	isbn = {1573-7462},
	journal = {Artificial Intelligence Review},
	number = {7},
	pages = {175},
	title = {A survey of safety and trustworthiness of large language models through the lens of verification and validation},
	url = {https://doi.org/10.1007/s10462-024-10824-0},
	volume = {57},
	year = {2024}}

@misc{miculicich2025veriguardenhancingllmagent,
      title={VeriGuard: Enhancing LLM Agent Safety via Verified Code Generation}, 
      author={Lesly Miculicich and Mihir Parmar and Hamid Palangi and Krishnamurthy Dj Dvijotham and Mirko Montanari and Tomas Pfister and Long T. Le},
      year={2025},
      eprint={2510.05156},
      archivePrefix={arXiv},
      primaryClass={cs.SE},
      url={https://arxiv.org/abs/2510.05156}, 
}

@inproceedings{
chen2025shieldagent,
author    = {Chen, Zhaorun and Kang, Mintong and Li, Bo},
  title     = {{ShieldAgent}: Shielding Agents via Verifiable Safety Policy Reasoning},
  booktitle = {Proceedings of the 42nd International Conference on Machine Learning},
  series    = {Proceedings of Machine Learning Research},
  volume    = {267},
  pages     = {8313--8344},
  publisher = {PMLR},
  year      = {2025}
}

@inproceedings{wang2025agentspeccustomizableruntimeenforcement,
  author    = {Wang, Haoyu and Poskitt, Christopher M. and Sun, Jun},
  title     = {{AgentSpec}: Customizable Runtime Enforcement for Safe and Reliable
               {LLM} Agents},
  booktitle = {Proceedings of the 48th {IEEE/ACM} International Conference on
               Software Engineering ({ICSE} 2026)},
  pages     = {1--12},
  publisher = {ACM},
  address   = {Rio de Janeiro, Brazil},
  year      = {2026},
}

@misc{kamath2025enforcingtemporalconstraintsllm,
  author        = {Kamath, Adharsh and others},
  title         = {Enforcing Temporal Constraints for {LLM} Agents},
  year          = {2025},
  eprint        = {2512.23738},
  archivePrefix = {arXiv},
  primaryClass  = {cs.AI},
  note          = {ICLR 2026 Workshop VerifAI},
 
}

@incollection{DBLP:series/lncs/BartocciDDFMNS18,
  author       = {Ezio Bartocci and
                  Jyotirmoy V. Deshmukh and
                  Alexandre Donz{\'{e}} and
                  Georgios Fainekos and
                  Oded Maler and
                  Dejan Nickovic and
                  Sriram Sankaranarayanan},
  title        = {Specification-Based Monitoring of Cyber-Physical Systems: {A} Survey
                  on Theory, Tools and Applications},
  booktitle    = {Lectures on Runtime Verification},
  series       = {Lecture Notes in Computer Science},
  volume       = {10457},
  pages        = {135--175},
  publisher    = {Springer},
  year         = {2018}
}

@misc{zhao2026sagellmsafegeneralizablellm,
      title={SAGE-LLM: Towards Safe and Generalizable LLM Controller with Fuzzy-CBF Verification and Graph-Structured Knowledge Retrieval for UAV Decision}, 
      author={Wenzhe Zhao and Yang Zhao and Ganchao Liu and Zhiyu Jiang and Dandan Ma and Zihao Li and Xuelong Li},
      year={2026},
      eprint={2602.23719},
      archivePrefix={arXiv},
      primaryClass={cs.RO},
      url={https://arxiv.org/abs/2602.23719}, 
}

@inproceedings{NIPS2017_d5e2c0ad,
 author = {Christiano, Paul F and Leike, Jan and Brown, Tom and Martic, Miljan and Legg, Shane and Amodei, Dario},
 booktitle = {Advances in Neural Information Processing Systems},
 editor = {I. Guyon and U. Von Luxburg and S. Bengio and H. Wallach and R. Fergus and S. Vishwanathan and R. Garnett},
 pages = {},
 publisher = {Curran Associates, Inc.},
 title = {Deep Reinforcement Learning from Human Preferences},
 url = {https://proceedings.neurips.cc/paper_files/paper/2017/file/d5e2c0adad503c91f91df240d0cd4e49-Paper.pdf},
 volume = {30},
 year = {2017}
}

@article{Bai2022ConstitutionalAH,
  title={Constitutional AI: Harmlessness from AI Feedback},
  author={Bai, Yuntao and others},
  journal={ArXiv},
  year={2022},
  volume={abs/2212.08073},
  url={https://api.semanticscholar.org/CorpusID:254823489}
}

@inproceedings{10.5555/3666122.3668460,
author = {Rafailov, Rafael and Sharma, Archit and Mitchell, Eric and Ermon, Stefano and Manning, Christopher D. and Finn, Chelsea},
title = {Direct preference optimization: your language model is secretly a reward model},
year = {2023},
publisher = {Curran Associates Inc.},
address = {Red Hook, NY, USA},
articleno = {2338},
numpages = {14},
location = {New Orleans, LA, USA},
series = {NIPS '23}
}

@inproceedings{blohm2024towards,
  title={Towards probabilistic contracts for intelligent cyber-physical systems},
  author={Blohm, Pauline and Fr{\"a}nzle, Martin and Herber, Paula and Kr{\"o}ger, Paul and Remke, Anne},
  booktitle={International Symposium on Leveraging Applications of Formal Methods},
  pages={26--47},
  year={2024},
  organization={Springer}
}

@InProceedings{FraenzleHansen05,
author="Fr{\"a}nzle, Martin
and Hansen, Michael R.",
editor="Van Hung, Dang
and Wirsing, Martin",
title="A Robust Interpretation of Duration Calculus",
booktitle="Theoretical Aspects of Computing -- ICTAC 2005",
year="2005",
publisher="Springer Berlin Heidelberg",
address="Berlin, Heidelberg",
pages="257--271",
doi          = {10.1007/11560647\_17},
isbn="978-3-540-32072-2"
}

@inproceedings{DonzeMaler10,
  author       = {Alexandre Donz{\'{e}} and
                  Oded Maler},
  editor       = {Krishnendu Chatterjee and
                  Thomas A. Henzinger},
  title        = {Robust Satisfaction of Temporal Logic over Real-Valued Signals},
  booktitle    = {Formal Modeling and Analysis of Timed Systems - 8th International
                  Conference, {FORMATS} 2010, Klosterneuburg, Austria, September 8-10,
                  2010. Proceedings},
  series       = {Lecture Notes in Computer Science},
  pages        = {92--106},
  publisher    = {Springer},
  year         = {2010},
  doi          = {10.1007/978-3-642-15297-9\_9}
}

@inproceedings{BlohmFHKR24,
  author       = {Pauline Blohm and
                  Martin Fr{\"{a}}nzle and
                  Paula Herber and
                  Paul Kr{\"{o}}ger and
                  Anne Remke},
  editor       = {Tiziana Margaria and
                  Bernhard Steffen},
  title        = {Towards Probabilistic Contracts for Intelligent Cyber-Physical Systems},
  booktitle    = {Leveraging Applications of Formal Methods, Verification and Validation.
                  Specification and Verification - 12th International Symposium, ISoLA
                  2024, Crete, Greece, October 27-31, 2024, Proceedings, Part {III}},
  series       = {Lecture Notes in Computer Science},
  pages        = {26--47},
  publisher    = {Springer},
  year         = {2024},
  doi          = {10.1007/978-3-031-75380-0\_3}
}

\appendix 

\newpage 

\section{Three-Layer Framework}\label{sec:framework-detail}



This Appendix section presents the technical details that correspond to those in Section~\ref{sec:framework}. 


The three-layer framework processes a plan $p$ through three 
sequential validation layers before and during execution. 
Each layer is associated with a contract consisting of an 
assumption on its inputs and a guarantee on its outputs; the 
guarantee of each layer satisfies the assumption of the next, 
enabling compositional reasoning (Section~\ref{sec:illustration}).
The framework operates in a layered validation and enforcement
mode: the User Layer validates plan $p$ prior to execution
against intent, policy, and ethics; the Operational Layer
validates both prior to and during execution against the current
world state; and the Functional Layer enforces safety
continuously at runtime.

\begin{example}[Setup]

A caregiver instructs a service robot deployed in an elderly-care 
facility: \emph{``Visit Rooms~12, 15, and~18 before 15:00 to check 
each resident's hydration and wellbeing; bring water if needed; do 
not disturb residents during rest periods.''}
The robot operates indoors across two floors with moderate human 
traffic, equipped with a mobile base, a 6-DoF compliant arm, and 
RGB-D / LiDAR perception.
The LLM agent produces a plan~$p$ — a sequence of room-visit, 
assessment, and conditional deliver actions — which passes 
sequentially through the three layers, 
as developed in 
Examples~\ref{ex:user}--\ref{ex:functional}.
\end{example}



\subsection{The Three Layers}


Each layer is a quadruple $(M_i, G_i, \pi_i, \Sigma_i)$ for $i \in \{U, O, F\}$: a monitor 
$M_i$ observing state and context, a guardrail $G_i$ enforcing local constraints, a fallback 
policy $\pi_i$ activated when $G_i$ becomes infeasible or a safety signal is violated, and a 
probabilistic assume--guarantee contract $\Sigma_i = (A_i, \Gamma_i)$ as defined in 
Section~\ref{sec:framework}. In the main text, $\Sigma_i$ serves as shorthand for the layer 
as a whole; the quadruple here makes the internal structure of each layer explicit without 
altering the contract semantics.
The three contracts are chained: the guarantee of each layer
satisfies the assumption of the next
($\Gamma_U \Rightarrow A_O$, $\Gamma_O \Rightarrow A_F$),
so the global invariant
$\Sigma_{\mathrm{global}} = \Gamma_U \wedge \Gamma_O \wedge \Gamma_F$
follows by sequential local certification, without global analysis
across heterogeneous layers.
Each layer additionally emits two constraint outputs: \emph{caps}---quantitative
limits on continuous variables such as speed or force---and
\emph{zones}---spatial or semantic regions where operation is
forbidden.
These are fused conservatively as
$\mathrm{caps}_{\mathrm{final}} = \min_i\,\mathrm{caps}_i$
and
$\mathrm{zones}_{\mathrm{final}} = \bigcup_i \mathrm{zones}_i$ (where $\mathit{caps}_i$ and $\mathit{zones}_i$ for each layer $i \in \{U,O,F\}$
are defined in Sections~\ref{sec:user-shield}--\ref{sec:functional-shield} below),
so that the tightest numerical limit and any locally prohibited
region dominate globally.
Each of the three layers instantiates this template for a
distinct safety dimension, as detailed in the subsections below.




\subsubsection{User Assurance Layer} 
\label{sec:user-shield}

The User Layer is the first assurance barrier, validating the plan
$p$ against intent, policy, and ethics before any world observation
is made.
Its responsibility spans three orthogonal alignment dimensions.
\emph{Cognitive alignment} checks whether the plan faithfully
realises the user's expressed intent---neither under-delivering nor
exceeding it.
\emph{Regulation alignment} checks compliance with domain-specific
rules and standards, covering both syntactic well-formedness and
semantic adherence to applicable policies.
\emph{Ethical alignment} checks that the plan respects ethical
constraints, such as prohibiting camera use in private spaces.
The layer additionally verifies the request itself: that the issuing
user holds the required authorisation and permissions.

The quadruple $(M_U, G_U, \pi_U, \Sigma_U)$ is instantiated as
follows.
The monitor $M_U$ observes incoming intent $y$ and flags unsafe or
ambiguous content---refusal keywords, Protected Health Information (PHI) exposure, permission mismatches, and policy violations.
It also enforces four levels of output checks: syntactic well-formedness,
semantic feasibility, semantic compliance with domain policies, and
semantic fidelity to user intent; a plan failing any of them is handled by the guardrail $G_U$, e.g. fixed or returned for refinement with a guidance.
When $G_U$ cannot resolve a violation, the fallback policy $\pi_U$ issues a plan disposition $s_U \in \{\mathtt{ok},\, 
\mathtt{clarify},\, \mathtt{deny}\}$, indicating whether the plan is approved, returned for 
clarification, or denied outright. Unlike $s_F$ and $s_O$, which are emitted during execution and
propagate upward through the layer stack, $s_U$ is a
pre-execution gate decision. It constitutes the terminal
output of the bottom-up channel: when $s_O$ reaches the User
Layer, the layer issues $s_U \in \{\texttt{ok},
\texttt{clarify}, \texttt{deny}\}$ to notify the user or
trigger plan recomputation, but $s_U$ is not propagated to
any higher layer (none exists).

The safety contract $\Sigma_U = (A_U, \Gamma_U)$ formalises the
layer's assurance commitment.
Given intent $y$, policy database $P_{\text{user}}$, and context
metadata, the layer produces $p_U,\, \mathrm{caps}_U,\, \mathrm{zones}_U
    \;=\; U(y,\, P_{\text{user}},\, \mathrm{context})$. 
The outputs $\mathrm{caps}_U$ and $\mathrm{zones}_U$ are not
pre-existing inputs but are derived through policy reasoning: from
user roles, permissions, and ethical rules, $G_U$ produces
quantitative limits on continuous variables---such as maximum speed
or proximity bounds---and semantic exclusions over locations and
tasks, together defining the boundaries within which the Operational
Layer must operate.
Assuming $A_U$---the request is issued by an authorised
user---the layer guarantees $\Gamma_U$---the emitted plan $p_U$
targets only authorised locations, encodes conditional actions
correctly, and contains no policy or ethical violations.
This guarantee directly constitutes assumption $A_O$ for the
Operational Layer.

\begin{example}[User Layer]\label{ex:user}
The layer validates plan $p$ against intent, policy, and ethics
before any world observation is made.
It rejects plans that omit the conditional water-delivery logic
(cognitive alignment), that schedule room entry during rest hours
without caregiver override (regulation alignment), or that route
the robot through a bathroom with cameras active (ethical
alignment).
Given that the request is issued by an authorised caregiver
($A_U$ holds), the guarantee $\Gamma_U$ is that $p_U$ visits only
rooms in $R_{\mathrm{auth}}$ (the set of locations authorised for the issuing user
under the current policy), encodes conditional delivery
correctly, and contains no policy or ethical violations.
The approved plan thus targets only authorised rooms,
constituting $A_O$ for the Operational Layer.
\end{example}

Key open problems include the automated formalisation of user
intent and domain policies into verifiable representations, and
the design of neural-symbolic validation tools that can bridge
informal intent with formal constraint checking.

\subsubsection{Operational Assurance Layer} 

The Operational Layer is the middle assurance barrier.
Where the User Layer constrains \emph{what} the system is permitted
to do and the Functional Layer constrains \emph{how} it executes,
the Operational Layer constrains \emph{where} and under what world
conditions execution is authorised---answering not ``what will
happen?'' but ``is this a world in which we are authorised and
certified to act?''
The ODD is not merely a list of environmental constraints but a structured object encoding five dimensions---physical configuration (geometry,
obstacles, dynamics), perceptual configuration (sensor availability, visibility),
contextual configuration (time of day, density, access restrictions), governance
configuration (authorisation regimes, policy zones), and social or normative context
(restricted roles, privacy constraints)---which may interact but are treated as
independently certifiable for the purposes of ODD membership evaluation. 
The Operational Design Domain is a certified subset
$\mathcal{W}_{\text{ODD}} \subseteq \mathcal{W}$ that encodes not only
physical feasibility but legitimacy: a robot may be dynamically
capable of navigating a crowded hospital corridor yet barred by
institutional policy during certain hours, and execution outside
$\mathcal{W}_{\text{ODD}}$ invalidates all downstream guarantees regardless
of physical capability.

The quadruple $(M_O, G_O, \pi_O, \Sigma_O)$ is instantiated as 
follows.
The monitor $M_O$ ingests sensor and perception data $\xi$ and
computes the world state estimate $\hat{w}$.
The guardrail $G_O$ enforces a deterministic validation decision over $\hat{w}$: identical
estimates must yield identical verdicts, ensuring reproducibility, auditability, and
preservation of compositional invariants. Determinism is an engineering choice rather
than a logical requirement of the probabilistic A/G framework~\cite{delahaye2011probabilistic}; it
is adopted here because stochastic verdicts complicate audit trails in safety-critical
deployments.
Probabilistic uncertainty is handled upstream---a world-risk score
$R_O(\hat{w}) = P(\hat{w} \notin \mathcal{W}_{\text{ODD}})$ informs margin
selection and conservative thresholds---but the final verdict of
$G_O$ remains deterministic, consistent with safety-critical
control architectures and required for assume--guarantee
composability.
When the verdict is invalid or the ODD degrades, the fallback
$\pi_O$ issues $s_O \in \{\text{nominal}, \text{degraded},
\text{MRC}\}$,
where \emph{MRC} denotes a Minimal Risk Condition~\cite{SAE_J3016_2021}---a
pre-planned, system-initiated response that brings the agent to a safe state upon
ODD exit or critical failure, triggering plan restriction or human takeover.

The safety contract $\Sigma_O = (A_O, \Gamma_O)$ formalises the
layer's commitment.
Given validated plan $p_U$, user-level constraints
$\mathrm{caps}_U$ and $\mathrm{zones}_U$, sensor data $\xi$, and
ODD specification $P_{\text{odd}}$, the layer produces:
$p_O,\, \mathrm{caps}_O,\, \mathrm{zones}_O
    \;=\; O(p_U,\, \xi,\, P_{\text{odd}})$,
where $p_O = p_U$ when $\mathrm{ODD}_{\text{valid}} =
\mathit{true}$, and $p_O = f_{\text{ODD}}(p_U)$---a restriction
of the plan to the valid subset of the domain---otherwise.
Assuming $A_O$ (i.e., $\Gamma_U$ holds),
the layer guarantees $\Gamma_O$:
execution remains within $\mathcal{W}_{\text{ODD}}$,
autonomy decisions respect the envelope $\mathcal{E}$,
ODD-invalid sub-plans are blocked,
and MRC is triggered upon ODD exit.
This guarantee constitutes $A_F$ for the Functional Layer.

\begin{example}[Operational Layer]\label{ex:operational}
The layer checks whether $\hat{w}$ lies in $\mathcal{W}_{\text{ODD}}$.
Room~15 is restricted (clinical procedure in progress): the verdict
is invalid; that sub-plan is blocked and the caregiver is notified.
The corridor to Room~12 is under-lit but remains within
$\mathcal{W}_{\text{ODD}}$ subject to $v \leq v_{\text{max}}$, issued as a
conditional validity decision.
Given that the plan targets only rooms in $R_{\text{auth}}$
($A_O$ holds), the guarantee $\Gamma_O$ is that execution stays
within $\mathcal{W}_{\text{ODD}}$ at $v \leq v_{\text{max}}$ and all
ODD-invalid sub-plans are blocked, constituting $A_F$ for the
Functional Layer.
\end{example}

A key open problem is establishing guarantees under distributional
shift: when $\hat{w}$ is drawn from a distribution that drifts from
the one under which $\mathcal{W}_{\text{ODD}}$ was certified, the
deterministic verdict may be unreliable, and robust conformal
methods~\cite{zhao2024robust} offer a promising direction.

\subsubsection{Functional Assurance Layer}\label{sec:functional-shield} 

The Functional Layer is the final assurance barrier, constraining
\emph{how} the system executes at control-loop frequency.
Once the User and Operational Layers have validated intent and
authorised the deployment context, the Functional Layer takes over
to enforce the physical, spatial, geometric, and temporal
constraints imposed by the deployment environment
dynamically---guaranteeing that the system remains within its safe
envelope even under dynamically changing conditions.

The layer draws on three complementary enforcement techniques.
Specification-based runtime monitoring tracks the executed
trajectory against safety specifications, producing robustness
margins $\rho_F(t)$ and verdicts that trigger intervention when
a violation is detected or anticipated.
Control barrier functions (CBFs) project any candidate control
command onto the admissible safe set via $\Pi_{\text{safe}}$,
correcting it minimally---if the command already satisfies all
constraints, $\Pi_{\text{safe}}$ leaves it unchanged. Recent work confirms that CBF-constrained quadratic programs can be
embedded inside LLM-driven planning loops without sacrificing task
completion, provided the dynamics model is sufficiently
accurate~\cite{khan2025safer,zhao2026sagellmsafegeneralizablellm}, and probabilistic
extensions now provide explicit bounds on safety failure probability
for learned barriers~\cite{urrea2026probabilistic,mestres2025probabilistic}. 
Simulation-based synthesis uses proxy setups such as world models
to perform pre-computations, for instance via safe reinforcement
learning, when the environment is large, unknown, or dynamic.
These three techniques are distributed across the quadruple
$(M_F, G_F, \pi_F, \Sigma_F)$: the monitor $M_F$ implements
specification-based monitoring; the guardrail $G_F$ applies CBF projection and simulation-based synthesis, with CBF
projection taking precedence when the two conflict at runtime; and when $G_F$ cannot
maintain safety---either because $\Pi_{\text{safe}}$ becomes
infeasible or $\rho_F(t) < 0$---the fallback $\pi_F$ is
activated, issuing $s_F \in \{\text{nominal}, \text{override},
\text{MRC}\}$ and triggering corrective action or safe-stop.

The safety contract $\Sigma_F = (A_F, \Gamma_F)$ closes the
contract chain.
Given validated plan $p_O$, fused constraints
$\mathrm{caps}_{\text{final}}$ and $\mathrm{zones}_{\text{final}}$,
candidate control $u_{\text{des}}(t)$, system state $x(t)$, and
system dynamics $(f, g)$, the layer produces a safe actuation
command $u_{\text{safe}}(t)$, a runtime robustness margin
$\rho_F(t)$, and a safety signal: 
$u_{\text{safe}},\, \rho_F,\, s_F
    \;=\; F(p_O,\, u_{\text{des}},\, x(t),\,
    \mathrm{caps}_{\text{final}},\, \mathrm{zones}_{\text{final}})$.
Assuming $A_F$---$w \in \mathcal{W}_{\text{ODD}}$ and $v \leq v_{\text{max}}$,
delivered by $\Gamma_O$, thereby closing the contract
chain---the layer guarantees $\Gamma_F$---no collision occurs,
all physical interactions satisfy $f \leq f_{\text{max}}$, and
any violation triggers corrective action or safe-stop.
Feedback flows bottom-up: $s_F$ is reported to the Operational
Layer, which in turn propagates $s_O$ to the User Layer, enabling
plan recomputation or user notification when execution cannot be
completed safely.

\begin{example}[Functional Layer]\label{ex:functional}
A resident enters the corridor en route to Room~12: $M_F$ detects
$d < d_{\min}$ and issues a hold.
Inside Room~12, a CBF constrains arm force to $f \leq f_{\max}$
throughout the handover, adapting to the resident's motion.
In Room~18, rearranged furniture blocks the delivery point; the
layer escalates to the Operational Layer with an updated obstacle
map.
Given $A_F$---namely $w \in \mathcal{W}_{\text{ODD}}$ and
$v \leq v_{\text{max}}$, delivered by $\Gamma_O$---the guarantee
$\Gamma_F$ is that no collision occurs and all physical
interactions satisfy $f \leq f_{\max}$, with violations triggering
corrective action or safe-stop.
\end{example}

Key open problems include guaranteeing real-time safety under
non-stationary dynamics and adapting safety envelopes
mid-execution when updated caps or zones arrive from upper layers.

\subsection{End-to-End Safety Guarantee}\label{sec:illustration}

Since $\Gamma_U \Rightarrow A_O$ and $\Gamma_O \Rightarrow A_F$, the three
contracts compose sequentially. Let $F_i = \neg\Gamma_i$ be the failure event of
layer~$i$ and $p_i = P(\Gamma_i)$ its satisfaction probability. Note that $G_U$
produces a deterministic verdict for every input, but determinism does not imply
correctness: over the distribution of possible plans, $G_U$ may still approve
non-compliant plans, so $p_U < 1$ in general. Four bounds of increasing tightness
characterise $P(\text{system safe}) = P(\Gamma_U \cap \Gamma_O \cap \Gamma_F)$:
\begin{align}
  P(\text{safe}) &\geq \max\!\bigl(0,\;p_U + p_O + p_F - 2\bigr)
      \tag{B1}\label{eq:B1-apx}\\
  P(\text{safe}) &\geq p_U + p_O + p_F - 2
                   + \textstyle\sum_{i<j} P(F_i \cap F_j)
      \tag{B2}\label{eq:B2-apx}\\
  P(\text{safe}) &= 1 - P(F_U \cup F_O \cup F_F)
      \tag{B3}\label{eq:B3-apx}\\
  P(\text{safe}) &= p_U \cdot p_{O|U} \cdot p_{F|OU}
      \tag{B4}\label{eq:B4-apx}
\end{align}

\eqref{eq:B1-apx} is the Fr\'{e}chet--Bonferroni bound~\cite{frechet1935,kwerel1975},
derivable from marginals alone and requiring no independence assumption;
it is the tightest bound so derivable~\cite{kwerel1975}. \eqref{eq:B2-apx} adds pairwise co-failure
probabilities $P(F_i \cap F_j)$, estimable from paired execution traces; it is
strictly tighter than~\eqref{eq:B1-apx} whenever layers share failure
modes~\cite{kwerel1975}. \eqref{eq:B3-apx} is exact via inclusion-exclusion; under mutual
independence it reduces to $p_U \cdot p_O \cdot p_F$. \eqref{eq:B4-apx} is the sequential
chain-rule decomposition, where $p_{O|U} = P(\Gamma_O \mid \Gamma_U)$ and
$p_{F|OU} = P(\Gamma_F \mid \Gamma_O,\Gamma_U)$; it is exact without any
independence assumption. When upstream filtering is effective---i.e., when the User Layer removes plans that would cause downstream failures---$p_{O|U} \geq p_O$ and $p_{F|OU} \geq p_F$, giving
$p_U \cdot p_{O|U} \cdot p_{F|OU} \geq p_U \cdot p_O \cdot p_F$. Whether this holds in a given deployment is an empirical property of the interaction between layers, not a structural guarantee; what the architecture guarantees is that the conditional factors are well-defined and independently certifiable. Any weakening of
any single layer degrades all four bounds, motivating independent certification
of all three layers.

\paragraph{Estimation caveat.}
Bound~\eqref{eq:B4-apx} is the natural certification target, but instantiating any of
\eqref{eq:B1-apx}--\eqref{eq:B4-apx} requires estimating quantities---$p_U$, $p_O$, $p_F$, co-failure
rates, and the conditionals $p_{O|U}$, $p_{F|OU}$---that standard PAC
theory~\cite{valiant1984} cannot supply directly, because LLM agent traces
violate the i.i.d.\ assumption in two ways: each step conditions on prior context
(non-stationarity), and all three layers share the same model backbone, coupling
their failure events. Several frameworks offer partial remedies.
Martingale-based bounds---used by Lotfi et al.~\cite{lotfi2024tokens} to derive
token-level generalisation bounds for LLMs---tolerate within-sequence dependence
without requiring stationarity.
Stability-based bounds for mixing processes~\cite{mohri2009,mohri2010} give
algorithm-specific guarantees for $\beta$- and $\phi$-mixing sequences,
subsuming i.i.d.\ as a special case, at the cost of characterising the mixing
rate of LLM traces.
Non-exchangeable conformal prediction~\cite{barber2023} provides distribution-free
coverage with degradation bounds quantified as a function of total-variation
distance from exchangeability.
Anytime-valid inference via e-processes~\cite{ramdas2023} yields confidence
sequences valid at all stopping times, fitting naturally with the incremental
re-certification of Section~\ref{sec:adaptivity}.
Most directly, Davidov et al.~\cite{davidov2026} reframe LLM safety probability
estimation as a survival analysis problem, constructing calibrated lower bounds on
time-to-unsafe-sampling via conformal prediction with finite-sample coverage
guarantees and no distributional assumptions.
None of these simultaneously handles non-stationarity, heterogeneous layer composition,
and correlated backbone failures; closing that gap is the open problem of
Section~\ref{sec:limitations}. For \eqref{eq:B1-apx} alone, martingale or mixing-process bounds are the most plausible near-term route, as they require only marginal estimates $p_U, p_O, p_F$
under within-sequence dependence. Until the full gap is closed, \eqref{eq:B1-apx}--\eqref{eq:B4-apx} provide a
principled certification target rather than deployable certificates.

Together, Examples~\ref{ex:user}-\ref{ex:functional} confirm that the composition operates as a live 
bidirectional assurance loop rather than a one-pass pipeline: forward 
contract propagation ($\Gamma_U \Rightarrow A_O \Rightarrow \Gamma_O 
\Rightarrow A_F \Rightarrow \Gamma_F$) certifies each stage before 
execution, while bottom-up safety signals ($s_F \rightarrow s_O 
\rightarrow s_U$) enable plan recomputation when execution cannot be 
completed safely within the certified envelope.

\section{Why Three Layers Are Structurally Necessary}
\label{app:impossibility}

This appendix formalises the structural argument of Section~\ref{sec:background-main}.
We do \emph{not} claim that single-layer certification is logically
impossible under all conceivable definitions.
We claim something more specific and more useful: any certification
architecture that (i)~respects the causal timeline of agent execution
and (ii)~satisfies the three desiderata below must implement the staged
information structure of the three-layer design---either explicitly or
implicitly.
The three layers are therefore not one solution among many but the
\emph{shape that any adequate solution must take}.

\paragraph{Formal setup (retained from main text).}
An LLM agent execution is a triple $e=(p,w,x)$ where $p\in\mathcal{P}$
is the LLM-generated plan, $w\in\mathcal{W}$ is the realised world state
observed via sensor data $\xi$ at execution time, and
$x=(x(t),u(t))_{t\ge 0}$ is the state--control trajectory during
closed-loop execution.
Define three sub-$\sigma$-algebras of the underlying probability space
$(\Omega,\mathcal{F},\mathbb{P})$:
\[
  \mathcal{I}_U = \sigma(y,\mathcal{P}_{\text{user}},\text{role}),
  \qquad
  \mathcal{I}_O = \sigma(\mathcal{I}_U,\xi,\hat{w},\mathcal{W}_{\text{ODD}}),
  \qquad
  \mathcal{I}_F = \sigma(\mathcal{I}_O,x(t),u(t)),
\]
and three Boolean safety predicates:
$\Phi_U(e)$ (semantic safety, $\mathcal{I}_U$-measurable),
$\Phi_O(e)$ (operational safety, $\mathcal{I}_O$-measurable but not
$\mathcal{I}_U$-measurable in general), and
$\Phi_F(e)$ (dynamical safety, $\mathcal{I}_F$-measurable but not
$\mathcal{I}_O$-measurable in general).

\subsection{Information-Set Ordering}

\begin{remark}[Strict inclusion of information sets]
\label{rem:strict-inclusion}
The inclusions $\mathcal{I}_U \not\subseteq \mathcal{I}_O \not\subseteq
\mathcal{I}_F$ are \emph{strict} under any operationally non-degenerate
deployment distribution $\mathcal{D}$
(Definition~\ref{def:nondeg} below).
\begin{itemize}
  \item \emph{($\mathcal{I}_U \not\subseteq \mathcal{I}_O$).}
    Two executions $(p,w,x)$ and $(p,w',x')$ sharing the same plan,
    intent, and role metadata are $\mathcal{I}_U$-indistinguishable.
    Setting $w\in\mathcal{W}_{\text{ODD}}$ and
    $w'\notin\mathcal{W}_{\text{ODD}}$ (possible whenever
    $\alpha:=\mathbb{P}_\mathcal{D}(w\notin\mathcal{W}_{\text{ODD}})>0$)
    yields $\Phi_O(e)=1$ and $\Phi_O(e')=0$, so $\Phi_O$ is not
    $\mathcal{I}_U$-measurable.
  \item \emph{($\mathcal{I}_O \not\subseteq \mathcal{I}_F$).}
    Two executions $(p,w,x)$ and $(p,w,x')$ sharing plan and world state
    are $\mathcal{I}_O$-indistinguishable.
    LLM non-determinism and stochastic actuation can yield trajectories
    $x$ satisfying all dynamical constraints while $x'$ incurs a
    collision at some $t>0$ (possible whenever
    $\beta:=\mathbb{P}_\mathcal{D}(\Phi_F=0\mid w\in
    \mathcal{W}_{\text{ODD}})>0$), so $\Phi_F$ is not
    $\mathcal{I}_O$-measurable.
\end{itemize}
This ordering is a direct consequence of the physical execution timeline
and holds regardless of the computational architecture employed.
\end{remark}

\begin{definition}[Operationally non-degenerate distribution]
\label{def:nondeg}
A distribution $\mathcal{D}$ over executions $e=(p,w,x)$ is
\emph{operationally non-degenerate} if $\alpha>0$ and $\beta>0$
as defined above.
Both conditions are mild and hold in any realistic deployment.
\end{definition}

\subsection{Three Causally-Grounded Desiderata}

Three
desiderata follow directly from the physical and institutional
structure of agent deployment rather than from a design choice about
contract semantics.
Their grounding in the causal structure of action---the distinction
between what information is available \emph{before} an action versus
\emph{after}---is in the tradition of structural-equation models of
causality \citep{halpern2016actual}.

\begin{desideratum}[Preventive semantic gate]\label{des:D1}
Semantic authorisation ($\Phi_U$) must be certified
\emph{before execution begins}, using only information available
prior to any world observation.
A scheme that permits execution of a semantically unauthorised plan
and attempts correction afterward is an \emph{incident-response}
architecture, not a safety architecture.
This is not a design preference: actions in the physical world are
generically irreversible on the timescale of plan execution,
so post-hoc correction cannot restore the pre-execution state.
\end{desideratum}

\begin{desideratum}[Causal observability of operational safety]\label{des:D2}
Operational safety ($\Phi_O$) depends on the realised world state $w$,
which is physically unavailable until sensor data $\xi$ is collected at
execution time.
No pre-execution computation can substitute for this observation without
either (a) assuming the world state in advance---reintroducing
distributional-shift failure modes---or (b) adopting a worst-case bound
that blocks all non-trivially constrained execution.
Neither substitute constitutes a valid operational safety certificate.
\end{desideratum}

\begin{desideratum}[Trajectory dependence of dynamical safety]\label{des:D3}
Dynamical safety ($\Phi_F$) depends on the state--control trajectory
$(x(t),u(t))_{t\ge 0}$, which does not exist until the control loop is
running.
Pre-execution or pre-actuation certification of $\Phi_F$ is either a
simulation---inheriting all limitations of the world model---or a
worst-case envelope too conservative for practical deployment.
Neither substitute constitutes runtime enforcement.
\end{desideratum}

\medskip
These desiderata are not definitional impositions on the A/G framework.
D1 reflects the irreversibility of physical action; D2 and D3 reflect the
causal precedence of observation over inference: a property that depends
on data not yet available cannot be certified before that data arrives
\citep{halpern2016actual}.
Together, D1--D3 establish the strict temporal ordering
$\tau_U < \tau_O < \tau_F$ of certification stages, where $\tau_i$
denotes the earliest time at which $\Phi_i$ can be certified.
This ordering coincides exactly with the strict inclusion
$\mathcal{I}_U \not\subseteq \mathcal{I}_O \not\subseteq \mathcal{I}_F$
of Remark~\ref{rem:strict-inclusion}.

\subsection{The Collapse Argument}

\begin{proposition}[Collapse argument]
\label{prop:collapse}
Any architecture with fewer than three independently certified stages,
each satisfying one of \textrm{D1}--\textrm{D3}, must either
(i)~violate at least one desideratum, or
(ii)~implicitly reconstruct the three-stage structure within a nominally
single component.
\end{proposition}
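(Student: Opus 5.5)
The plan is a pigeonhole argument over certification stages, combined with the measurability facts of Remark~\ref{rem:strict-inclusion}.

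\emph{Formalising a stage.} Model an architecture with $k\le 2$ independently certified stages as a sequence of certification times $t_1<\dots<t_k$ along the execution timeline. Each stage $j$ issues a verdict that is measurable with respect to the information available at $t_j$. By the setup, that information is one of $\mathcal{I}_U$, $\mathcal{I}_O$, $\mathcal{I}_F$, or an intermediate $\sigma$-algebra sandwiched between two of them. Each predicate $\Phi_i$ must be certified by some stage. Desiderata D1--D3 then pin the admissible times. D1 forces $\Phi_U$ to be certified at a time $\le\tau_U$, i.e.\ with information contained in $\mathcal{I}_U$, before any world observation. D2 forces $\Phi_O$ to be certified using information containing $\xi$, so no earlier than $\tau_O$, yet before actuation. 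D3 forces $\Phi_F$ to be certified from the running trajectory, so at times $\ge\tau_F$.

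\emph{Pigeonhole and case split.} With $k\le 2$ stages and three predicates, some stage $j$ must carry the certification of two predicates $\Phi_i,\Phi_{i'}$ with $i<i'$ in the order $U<O<F$. I then split into two cases.

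\textbf{Case (a): stage $j$ issues a single verdict at a single time $t_j$.} If $t_j\le\tau_U$, the verdict is $\mathcal{I}_U$-measurable. Remark~\ref{rem:strict-inclusion}, using $\alpha>0$ or $\beta>0$ from Definition~\ref{def:nondeg}, exhibits $\mathcal{I}_U$-indistinguishable executions $e,e'$ with different values of $\Phi_{i'}$. Any such verdict therefore either misclassifies one of them or is a worst-case block, which violates D2 or D3. If instead $t_j>\tau_U$, the certification of $\Phi_U$ is post-observation, which violates D1. The subcase $\{i,i'\}=\{O,F\}$ is handled the same way, with the pair $(\tau_O,\tau_F)$ and $\beta>0$ in place of $(\tau_U,\tau_O)$ and $\alpha>0$.

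\textbf{Case (b): stage $j$ issues verdicts at two distinct times.} Then it contains two separately timed decision points, each measurable with respect to a different information set. Together with the other stage, these realise three certification points ordered as $\tau_U<\tau_O<\tau_F$. This is exactly alternative (ii), an implicit reconstruction of the three-stage structure.

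\emph{Main obstacle.} The hard step is making Case~(b) rigorous. The notion of ``implicitly reconstructs'' has to be defined so that the dichotomy is exhaustive and not circular. I would define a component's \emph{effective stage count} as the number of distinct filtration levels at which its output depends on newly revealed information, i.e.\ the distinct $\sigma$-algebras among $\mathcal{I}_U\subsetneq\mathcal{I}_O\subsetneq\mathcal{I}_F$ at which its verdict is measurable but not measurable with respect to the previous one. Case~(a) shows that an effective stage count below three violates a desideratum. So any compliant architecture has effective stage count at least three, which is alternative (ii) by definition.

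A secondary subtlety is worst-case verdicts, which are constant and hence trivially measurable. These are excluded by the explicit clauses (b) of D2 and by the analogous ``too conservative'' clause in D3, so each such verdict falls under alternative (i).
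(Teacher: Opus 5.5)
Your pigeonhole reorganisation works for the pairs $\{U,O\}$ and $\{U,F\}$, which are the paper's Cases~A and~C. In a single-time stage, one of two things happens. Either the verdict is $\mathcal{I}_U$-measurable, and then Remark~\ref{rem:strict-inclusion} forces a misclassification or a worst-case block (use $\mathcal{I}_U\subseteq\mathcal{I}_O$ to get the $\Phi_F$ witness). Or world data has already been used, and D1 fails. Your explicit separation of the single-time branch from the multi-level branch is more careful than the paper's Case~C, which just asserts that a joint certifier must run sequentially. One small fix: split on whether the stage's information includes world observation, not on $t_j\le\tau_U$. A stage acting in $(\tau_U,\tau_O)$ still sees only $\mathcal{I}_U$ and does not violate D1.

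The gap is the claim that $\{O,F\}$ ``is handled the same way.'' Your $\{U,O\}$ argument closes only because D1 is a \emph{no-later-than} constraint on $\Phi_U$. D2 and D3 are purely \emph{no-earlier-than} constraints, and $\tau_i$ is defined as the \emph{earliest} certification time. So nothing in D2 requires $\Phi_O$ to be certified before actuation; your phrase ``yet before actuation'' is an extra hypothesis.

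Concretely, take a $\Phi_U$ gate followed by one stage that runs inside the control loop, jointly deciding ODD membership and enforcing dynamical safety. That stage sees $\mathcal{I}_F\supseteq\mathcal{I}_O$, uses the world observation (D2 holds) and the trajectory (D3 holds), and does no staged computation. It therefore falls in your Case~(a) without violating any desideratum, so your final claim that effective stage count below three forces a violation also fails. If you instead count filtration levels per verdict component, every correct certifier of all three predicates counts as three levels, and alternative~(ii) becomes vacuous; this is the circularity you flagged.

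The paper excludes this design with an idea your proposal lacks. $\Gamma_O$ must serve as the assumption $A_F$ of the dynamical controller, so the ODD verdict has to be a stable, auditable event fixed \emph{before} actuation. If it depends on trajectory data, $A_F$ is not well formed and $\Gamma_O\Rightarrow A_F$ breaks; the pre-actuation autonomy-envelope decision also disappears. You need to import that contract-chain requirement explicitly, or state a strengthened D2 as a hypothesis, to close the subcase. What it yields is a broken contract chain, which is how the paper concludes Case~B, not a literal violation of D1--D3.
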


\begin{proof}
There are three cases, corresponding to the three distinct ways of
collapsing two of the three stages.

\medskip
\noindent\textbf{Case A: Collapse $\mathcal{I}_U$ and $\mathcal{I}_O$
(single pre-execution stage).}
A certifier operating only on pre-execution information must decide on
$\Phi_O$ before sensor data $\xi$ is available.
By D2, no pre-execution computation yields a valid $\Phi_O$ certificate
without either assuming $w$ or reverting to a worst-case bound.
Alternatively, if the certifier defers the $\Phi_U$ decision until
after world-state observation is incorporated, D1 is violated: a
semantically unauthorised plan may already have initiated actions by the
time the joint check completes.
In either sub-case, at least one desideratum is violated.
The specific failure mode is the one identified in Section~\ref{sec:background-main} for
existing operational-layer approaches: ODD membership is assumed rather
than verified, or semantic authorisation is not a true pre-execution gate.

\medskip
\noindent\textbf{Case B: Collapse $\mathcal{I}_O$ and $\mathcal{I}_F$
(single execution-time stage).}
A certifier that simultaneously decides ODD membership (a deterministic
binary verdict required for downstream composability) and enforces
dynamical safety at control-loop frequency entangles two logically and
temporally distinct decisions.
The ODD verdict---which must be stable and auditable as it constitutes
assumption $A_F$ for the dynamical controller---is then contingent on
real-time trajectory data that changes at every control step.
The result is that $A_F$ is no longer a well-formed, stable assumption:
the dynamical controller cannot rely on a fixed operational certificate,
and the contract chain $\Gamma_O \Rightarrow A_F$ is broken.
Additionally, collapsing these stages removes the autonomy-envelope
governance function of the Operational Layer entirely, since autonomy
decisions require world-state estimation without yet having trajectory
data.

\medskip
\noindent\textbf{Case C: Collapse $\mathcal{I}_U$ and $\mathcal{I}_F$,
leaving $\mathcal{I}_O$ separate.}
A certifier must simultaneously check $\Phi_U$---which requires only
$\mathcal{I}_U$ and must be completed before any world observation
(D1)---and $\Phi_F$---which requires $\mathcal{I}_F$ and is only
available during control-loop execution (D3).
These two tasks operate at strictly different points in time and over
strictly disjoint information increments.
Any implementation that performs both necessarily executes them
sequentially: first the $\mathcal{I}_U$-based check, then the
$\mathcal{I}_F$-based enforcement.
This sequential computation implicitly defines a boundary between
$\mathcal{I}_U$ and $\mathcal{I}_F$ within the nominally single
component---reconstructing the two-stage structure rather than
eliminating it, while leaving $\mathcal{I}_O$-based certification
unaddressed.

\medskip
\noindent\textbf{Combining cases.}
In Case~A, $\Phi_O$ or $\Phi_U$ is uncertified; in Case~B, the
contract chain is broken at $\Gamma_O \Rightarrow A_F$; in Case~C,
$\Phi_F$ or $\Phi_U$ is deferred, or the two-stage structure is
implicitly reconstructed.
All three cases are exhaustive over the possible two-stage designs.
None achieves simultaneous satisfaction of D1, D2, and D3 without
either violating one or reconstructing the three-stage structure.
\end{proof}

\subsection{Necessity Corollary and the Reconstruction Remark}

\begin{corollary}[Three stages are necessary]
\label{cor:three-necessary}
The three-layer architecture of Section~3 is the \emph{minimum}
number of independently certified stages that can simultaneously satisfy
\textrm{D1}, \textrm{D2}, and \textrm{D3} under any operationally
non-degenerate $\mathcal{D}$.
Any two-stage design necessarily violates at least one desideratum or
implicitly reconstructs the three-stage structure.
Additional stages may refine the architecture---for instance by
splitting $\mathcal{I}_U$ into separate cognitive, regulatory, and
ethical sub-layers---but cannot reduce the minimum.
\end{corollary}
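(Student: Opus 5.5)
The corollary follows from Proposition~\ref{prop:collapse} together with Remark~\ref{rem:strict-inclusion}. I would split it into three claims: (a) existence, meaning three stages suffice; (b) minimality, meaning no design with at most two independently certified stages works; and (c) refinement, meaning extra stages are permitted but do not lower the minimum.

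For (a), I would point to the three-layer instantiation of Section~\ref{sec:framework}. $\Sigma_U$ acts only on $\mathcal{I}_U$ before execution, which gives D1. $\Sigma_O$ acts on $\mathcal{I}_O$ once $\xi$ is observed, which gives D2. $\Sigma_F$ acts on $\mathcal{I}_F$ at control-loop time, which gives D3. The chain $\Gamma_U\Rightarrow A_O$, $\Gamma_O\Rightarrow A_F$ makes the three stages compose.

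For (b), I would first handle the one-stage case. A single certifier runs at a single time $\tau$. Under a non-degenerate $\mathcal{D}$, Remark~\ref{rem:strict-inclusion} shows that $\Phi_O$ is not $\mathcal{I}_U$-measurable (since $\alpha>0$) and $\Phi_F$ is not $\mathcal{I}_O$-measurable (since $\beta>0$). So if $\tau<\tau_O$, the stage cannot certify $\Phi_O$, violating D2. If $\tau\ge\tau_O$, the stage is not a pre-execution gate for $\Phi_U$, violating D1. The two-stage case is exactly what Proposition~\ref{prop:collapse} covers. Its Cases A, B and C list every way of assigning three certification tasks to two stages that respects the order $\tau_U<\tau_O<\tau_F$. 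Each case either violates some D$i$ or rebuilds the three-stage structure internally, and a rebuilt structure still counts as three stages.

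For (c), I would argue that any added stage must sit inside one of the intervals $[\tau_U,\tau_O)$, $[\tau_O,\tau_F)$ or $[\tau_F,\infty)$. If it splits $\mathcal{I}_U$ into cognitive, regulatory and ethical checks, it is a refinement. Applying the minimality argument to the coarsened architecture shows that each of the three intervals must still contain at least one certified stage.

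The main obstacle is making ``number of independently certified stages'' precise enough that ``implicit reconstruction'' counts as three stages rather than as an escape. My approach would be to define a stage by its certification time, and the number of stages as the number of distinct information sets $\mathcal{I}_U,\mathcal{I}_O,\mathcal{I}_F$ at which some predicate is certified. Under that definition, D1--D3 force certification events at three distinct sigma-algebras, so at least three stages are needed. Non-degeneracy is what makes those sigma-algebras distinct.
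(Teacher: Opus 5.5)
Your split into existence, minimality and refinement is sound. Treating the one-stage case explicitly is a real addition, since the proof of Proposition~\ref{prop:collapse} handles only the three two-stage collapses. The gap is in the step you use to close the ``main obstacle'': the claim that D1--D3 force certification events at three distinct $\sigma$-algebras. Only D1 imposes a deadline: $\Phi_U$ must be certified before any world observation, which pins it to $[\tau_U,\tau_O)$. D2 and D3 impose only lower bounds. The times $\tau_O$ and $\tau_F$ are the \emph{earliest} times at which $\Phi_O$ and $\Phi_F$ can be certified, and because $\mathcal{I}_F=\sigma(\mathcal{I}_O,x(t),u(t))\supseteq\mathcal{I}_O$, the predicate $\Phi_O$ is also $\mathcal{I}_F$-measurable. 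So consider a design with a pre-observation gate for $\Phi_U$ plus one control-loop stage that certifies $\Phi_O$ and $\Phi_F$ together. It uses the observed world state, so D2 holds, and the realised trajectory, so D3 holds. Under your counting it has two stages and violates no desideratum. Distinct earliest times $\tau_U<\tau_O<\tau_F$ do not yield distinct certification times, so your pigeonhole delivers only two stages.

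That design is the paper's Case~B, and it is where your summary of Proposition~\ref{prop:collapse} is off. Case~B is not excluded by a desideratum violation or by reconstruction. It is excluded by the contract-chain requirement that the ODD verdict $\Gamma_O$ be a fixed, pre-actuation event, because $\Gamma_O$ serves as the assumption $A_F$ of the functional contract ($\Gamma_O\Rightarrow A_F$). The paper adds that the autonomy envelope $\mathcal{E}(\hat{w})$ must be set before trajectory data exists. You need to import this premise explicitly, e.g.\ by strengthening D2 so that $\Phi_O$ must be certified before actuation, i.e.\ within $[\tau_O,\tau_F)$. With that upper bound, the windows $[\tau_U,\tau_O)$, $[\tau_O,\tau_F)$ and $[\tau_F,\infty)$ are disjoint, and your counting argument works cleanly. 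It gives at least three stages directly, and it covers the one-stage case and all collapse cases at once. In particular, Case~C, the non-contiguous assignment $\{U,F\}\,|\,\{O\}$, becomes impossible outright; it is not an ``order-respecting'' assignment as you describe it. The same argument also yields your refinement claim~(c). Without that premise, the corollary does not follow from D1--D3 alone.
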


\begin{remark}[On implicit reconstruction]
\label{rem:reconstruction}
The Collapse Argument shows that a sufficiently powerful
single certifier that attempts to handle all three safety dimensions
must perform staged computation, implicitly partitioning its input into
the three information sets at different points in time.
A single component that does this is \emph{a three-layer architecture
with internal boundaries hidden from external inspection}---harder to
independently certify, harder to audit, and with no compositional safety
bound available between stages, since the intermediate guarantees are not
made explicit.
Making the boundaries explicit is therefore not a taxonomic preference
but a \emph{certification requirement}: without explicit boundaries,
the layer-level probabilities $p_U$, $p_O$, $p_F$ and the
conditionals $p_{O|U}$, $p_{F|OU}$ entering bounds (B1)--(B4) cannot be
independently estimated or verified.
\end{remark}

\begin{remark}[On the strength of the desiderata]
One might ask whether \textrm{D1}--\textrm{D3} could be weakened: specifically,
whether a sufficiently powerful single certifier on $\mathcal{I}_O$ could perform
two-phase computation, first checking $\Phi_U$ using only its
$\mathcal{I}_U$-measurable component, then checking $\Phi_O$, thereby handling
two safety dimensions without explicitly separating the layers.
The Collapse Argument shows this does not succeed: such a two-phase certifier
\emph{reconstructs} rather than avoids the layered structure
(Case~C of Proposition~1 and Remark~2), and leaves $\Phi_F$ uncertified in any
case.
The desiderata \textrm{D1}--\textrm{D3} are grounded in physical causality---the
irreversibility of action and the causal precedence of observation over
inference~\citep{halpern2016actual}---rather than in a design choice about A/G
contract semantics, and cannot be dissolved by redefining the certification
boundary.
\end{remark}

\subsection{Bound Degradation Under Collapse}
\label{sec:bound-degradation}

The Collapse Argument (Proposition~\ref{prop:collapse}) establishes
that any two-stage design must violate at least one of the desiderata
D1--D3, or implicitly reconstruct the three-stage structure.
The following proposition establishes the quantitative counterpart:
the conditional factors of (B4) are not merely degraded but
structurally unverifiable under any genuine two-stage collapse.
Together, Propositions~\ref{prop:collapse}
and~\ref{prop:bound-degradation} close the logical gap between the
necessity argument of Section~\ref{sec:background-main} and the
quantitative framework of Section~\ref{sec:bounds}: the
three-layer architecture is not only the minimal structure satisfying
D1--D3, but the unique structure under which (B4) can be
independently verified as a product of auditable layer-level quantities.

\begin{proposition}[Certification Degradation Under Collapse]
\label{prop:bound-degradation}
Let $p_U \cdot p_{O|U} \cdot p_{F|OU}$ denote the chain-rule
decomposition (B4) of the three-layer architecture.
Under any two-stage collapse (Cases~A, B, C of
Proposition~\ref{prop:collapse}), at least one factor of (B4) is
either \emph{undefined} as a well-formed conditioning event or
\emph{unverifiable} as an independently certified probability.
Consequently, no genuine two-stage design yields (B4) as a product
of independently estimable, auditable layer-level quantities.
\end{proposition}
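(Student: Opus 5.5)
I would prove Proposition~\ref{prop:bound-degradation} by a case analysis that mirrors the Collapse Argument exactly. First I would fix what it means for a factor of (B4) to be \emph{independently estimable and auditable}. A factor $p_{X|Y}=\mathbb{P}(\Gamma_X\mid\Gamma_Y)$ qualifies if three conditions hold. The conditioning event $\Gamma_Y$ is emitted as an explicit verdict by a certified stage. That verdict is measurable with respect to the information set $\mathcal{I}_Y$ available at that stage. And the event $\Gamma_X$ is decided by a distinct stage operating on $\mathcal{I}_X$, so that paired traces (verdict of $Y$, verdict of $X$) can be recorded and counted. Under this definition, the three-layer architecture trivially satisfies the requirement: $\Gamma_U$ is $\mathcal{I}_U$-measurable, $\Gamma_O$ is $\mathcal{I}_O$-measurable, and $\Gamma_F$ is $\mathcal{I}_F$-measurable. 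Each is logged at its stage boundary, so $p_U$, $p_{O|U}$ and $p_{F|OU}$ are ratios of observable counts.

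Next I would go through Cases A, B and C of Proposition~\ref{prop:collapse} and identify, in each, which factor fails.

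\emph{Case A (merging $\mathcal{I}_U$ and $\mathcal{I}_O$).} The single stage outputs only the joint verdict $\Gamma_U\cap\Gamma_O$. There are two sub-cases.
\begin{itemize}
\item If the stage runs pre-observation, then by Remark~\ref{rem:strict-inclusion} ($\alpha>0$) the event $\Gamma_O$ is not $\mathcal{I}_U$-measurable. It is therefore not an event the stage can certify, so $p_{O|U}$ is \emph{undefined} as a certified conditioning.
\item If the stage runs post-observation, then $\Gamma_U$ is never logged separately. In that case $p_U$ and $p_{O|U}$ are identifiable only through their product, which makes them \emph{unverifiable} individually.
\end{itemize}

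\emph{Case B (merging $\mathcal{I}_O$ and $\mathcal{I}_F$).} I would use the Case~B argument that $A_F$ is no longer a stable event. The ODD verdict now depends on trajectory data, and $\beta>0$, so the event $\Gamma_O\cap\Gamma_U$ on which $p_{F|OU}$ conditions is not fixed prior to the $\Phi_F$ decision. The factor $p_{F|OU}$ thus collapses into an unseparated joint estimate $\mathbb{P}(\Gamma_O\cap\Gamma_F\mid\Gamma_U)$, and it is unverifiable as a layer-level quantity.

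\emph{Case C (merging $\mathcal{I}_U$ and $\mathcal{I}_F$, with $\mathcal{I}_O$ left separate).} There are again two possibilities.
\begin{itemize}
\item If the merged stage genuinely performs only one certification, then $\Gamma_U$ is unavailable as a conditioning event for the separate $O$-stage. Alternatively, $\Gamma_F$ is conditioned on an $O$-verdict that was itself computed without an upstream $U$-certificate. Either way a factor is undefined.
\item Otherwise the stage performs staged computation. Case~C then says it has reconstructed the boundary, so the design is not a genuine two-stage design.
\end{itemize}
Exhaustiveness of the three cases is inherited directly from Proposition~\ref{prop:collapse}.

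The main obstacle is making ``unverifiable'' rigorous rather than rhetorical, because the chain rule (B4) remains mathematically true for any joint law. My first attempt would be an identifiability argument. I would exhibit two deployment distributions that agree on everything the collapsed architecture logs (the joint verdicts) but differ in the value of the disputed factor. For example, in Case~A I would keep $\mathbb{P}(\Gamma_U\cap\Gamma_O)$ fixed while trading $p_U$ against $p_{O|U}$. Non-degeneracy ($\alpha,\beta>0$) guarantees that such a perturbation is feasible, so no estimator based on the collapsed system's audit trail can recover the factor. This reduces ``genuine two-stage'' to ``the logged $\sigma$-algebra of verdicts is strictly coarser than $\sigma(\Gamma_U,\Gamma_O,\Gamma_F)$''. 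That condition is the precise reading I would adopt, and Remark~\ref{rem:reconstruction} then handles every design for which it fails.
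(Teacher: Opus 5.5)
Your skeleton matches the paper's: the three collapse cases of Proposition~\ref{prop:collapse}, with explicitly separated designs dismissed as reconstructions. Replacing its case-specific informal arguments by a single identifiability argument is a legitimate sharpening. However, the criterion you propose to adopt does not work.

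The factors of (B4) depend only on $\mathbb{P}$ restricted to the nested $\sigma$-algebra $\mathcal{N}=\sigma(\Gamma_U,\ \Gamma_U\cap\Gamma_O,\ \Gamma_U\cap\Gamma_O\cap\Gamma_F)$. Indeed $p_U=\mathbb{P}(\Gamma_U)$, $p_{O|U}=\mathbb{P}(\Gamma_U\cap\Gamma_O)/\mathbb{P}(\Gamma_U)$ and $p_{F|OU}=\mathbb{P}(\Gamma_U\cap\Gamma_O\cap\Gamma_F)/\mathbb{P}(\Gamma_U\cap\Gamma_O)$. This $\mathcal{N}$ has at most four atoms. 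It is strictly coarser than $\sigma(\Gamma_U,\Gamma_O,\Gamma_F)$ whenever $\Gamma_O$ splits $\Gamma_U^c$ into two events of positive probability. So consider a design whose audit records only the nested events, for instance one that labels downstream guarantees only on traces that satisfied the upstream ones. It is ``genuine two-stage'' by your definition, yet it identifies all three factors exactly. Formalised your way, the proposition is false.

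What works is requiring that $\Gamma_U$ or $\Gamma_U\cap\Gamma_O$ fail to be measurable with respect to what is recorded. A merged stage exposing only $\Gamma_U\cap\Gamma_O$ (Case~A) loses $\Gamma_U$. One exposing only $\Gamma_O\cap\Gamma_F$ (Case~B) loses $\Gamma_U\cap\Gamma_O$. One exposing only $\Gamma_U\cap\Gamma_F$ (Case~C) loses both. Your mass-shifting perturbation then needs positive probability inside the merged atom, e.g.\ $\mathbb{P}(F_U\cup F_O)>0$ in Case~A. That is a condition on the joint law of the guarantee events, and $\alpha,\beta>0$ do not supply it, since they are statements about world states and trajectories.

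Note also that what is recorded must be ground-truth labels of whether each guarantee held, not the stages' own verdicts. If $\Gamma_U$ were the $U$-stage's verdict, $p_U=1$ by construction. The paper stresses $p_U<1$ precisely because a deterministic $G_U$ can approve non-compliant plans.

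Your Case~C also needs repair. Declaring every staged computation a reconstruction skips the sub-case on which the paper's Case~C actually rests: a component with a \emph{hidden} internal boundary. The paper counts this as a genuine two-stage design. It argues that $F_U$ and $F_F$ are generated by one internal process, so $\Pr(F_U\cap F_F)$ is uncontrolled and the CBF-based partial decoupling is unavailable. Only a separately auditable boundary (its sub-case~(b)) is called a reconstruction.

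In your framework, a hidden-boundary component exposing only $\Gamma_U\cap\Gamma_F$ is covered by the corrected perturbation argument. One exposing both labels, however, has every factor identifiable. You must then either call it a reconstruction, which stretches Remark~\ref{rem:reconstruction} beyond the paper's use, or import the paper's notion of independent auditability of the producing sub-system, which identifiability does not capture.

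Two smaller points. First, in the pre-observation branch of Case~A, $\Gamma_U$ is a well-formed conditioning event; what fails is labelling $\Gamma_O$ from $\mathcal{I}_U$. So the factor is ``unverifiable'' rather than ``undefined''. The paper argues this via its assumed-ODD / worst-case / prior-integration trichotomy, the last of which yields $p_O$ instead of $p_{O|U}$. Second, in Case~B the trajectory dependence of the ODD verdict is the defining feature of an unseparated collapse, not a consequence of $\beta>0$. The paper uses that feature to show the conditioning event of $p_{F|OU}$ is trajectory-indexed.
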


\begin{proof}
We treat each collapse case in turn.

\paragraph{Case~A: Collapse of $\mathcal{I}_U$ and $\mathcal{I}_O$
(single pre-execution stage).}
The combined certifier must issue the ODD-membership verdict
($\Phi_O$) using only $\mathcal{I}_U$-measurable information.
By Remark~\ref{rem:strict-inclusion}, $\Phi_O$ is not
$\mathcal{I}_U$-measurable: two executions $(p,w,x)$ and
$(p,w',x')$ sharing the same plan are
$\mathcal{I}_U$-indistinguishable yet can have opposite
ODD-membership outcomes ($w \in \mathcal{W}_{\mathrm{ODD}}$,
$w' \notin \mathcal{W}_{\mathrm{ODD}}$) whenever
$\alpha := \mathcal{D}(w \notin \mathcal{W}_{\mathrm{ODD}}) > 0$.
A certifier confined to $\mathcal{I}_U$ therefore cannot label
execution traces with $\Gamma_O$ outcomes---which requires
observing the world state $w$---and hence cannot estimate
$p_{O|U} = \Pr(\Gamma_O \mid \Gamma_U)$ from pre-execution
information.
Any estimate must take one of three forms:
\emph{(a1)} assume $w \in \mathcal{W}_{\mathrm{ODD}}$ throughout,
  yielding an unverified surrogate rather than a certified conditional;
\emph{(a2)} adopt a worst-case bound, blocking most executions and
  rendering the certificate vacuously conservative; or
\emph{(a3)} integrate over a prior $\pi(w)$, yielding
  the marginal $p_O$ rather than the conditional $p_{O|U}$, since the
  conditioning event $\Gamma_U$ carries no information about $w$
  when the certifier cannot observe $w$.
In all three sub-cases, $p_{O|U}$ is not independently estimable as a
conditional quantity grounded in observed world states.
The factor is unverifiable.

\paragraph{Case~B: Collapse of $\mathcal{I}_O$ and $\mathcal{I}_F$
(single execution-time stage).}
The factor $p_{F|OU} = \Pr(\Gamma_F \mid \Gamma_O, \Gamma_U)$
requires $\Gamma_O$ to be a \emph{stable, pre-actuation event}:
the ODD verdict must be issued and fixed before the Functional
Layer begins executing, so that it constitutes a well-formed
conditioning event for $\Gamma_F$.
In a genuine collapse of $\mathcal{I}_O$ and $\mathcal{I}_F$,
however, there is no pre-actuation point at which an ODD verdict
can be issued independently of trajectory data: any such verdict
would require $\mathcal{I}_F$-information (the trajectory $x(t)$)
that does not yet exist, or it would require separating the
$\mathcal{I}_O$-based sub-process from the $\mathcal{I}_F$-based
sub-process within the nominally single component.
If the verdict is issued before $x(t)$ exists by separating the
two sub-processes, the architecture has reconstructed the explicit
$\mathcal{I}_O$/$\mathcal{I}_F$ boundary (Remark~\ref{rem:reconstruction})
and is not a genuine two-stage design.
If the verdict is not separated, $\Gamma_O$ is determined jointly
with $\Gamma_F$ as the trajectory unfolds, making the conditioning
event in $p_{F|OU}$ trajectory-indexed and therefore not a fixed,
pre-actuation event against which $\Gamma_F$ can be independently
certified.
The factor $p_{F|OU}$ is undefined as a static certification
quantity, and (B4) cannot be written as a well-formed product.

\paragraph{Case~C: Collapse of $\mathcal{I}_U$ and $\mathcal{I}_F$,
leaving $\mathcal{I}_O$ separate.}
Since the Operational Layer remains independent, $\Gamma_O$ is
well-defined and $p_{O|U}$ is estimable.
The issue concerns the joint certification of $p_U$ and $p_{F|OU}$.
By Proposition~\ref{prop:collapse} Case~C, any certifier that
simultaneously handles $\Phi_U$ (pre-execution,
$\mathcal{I}_U$-measurable) and $\Phi_F$ (control-loop,
$\mathcal{I}_F$-measurable) must execute these checks sequentially,
implicitly defining an internal boundary.
Two sub-cases arise.
\emph{Sub-case~(a): the boundary is hidden.}
Both $\Gamma_U$ and $\Gamma_F$ are outputs of the same component;
their failure events $F_U$ and $F_F$ are generated by the same
internal process.
Independent certification of $p_U$ and $p_{F|OU}$---as required
by the modular guarantee structure of Appendix~\ref{sec:illustration},
which composes ``locally certifiable bounds into a system-level
guarantee''---requires each probability to be estimated from an
independently auditable sub-system.
With a hidden boundary, neither can be audited without auditing
the other.
In particular, the partial-decoupling strategy of Section~4---deriving
$\Gamma_F$ from a CBF certificate whose dynamics model does not
share parameters with the LLM reasoning engine that produces
$\Gamma_U$~\cite{pandya2025}---is unavailable: both verdicts
originate in the same component, so the cross-term
$\Pr(F_U \cap F_F)$ is uncontrolled and cannot be bounded by
partial independence.
The factors $p_U$ and $p_{F|OU}$ in (B4) are not independently
verifiable, and (B4) cannot be certified as a product of
independently audited quantities.
\emph{Sub-case~(b): the boundary is made explicit.}
If the component separately certifies its
$\mathcal{I}_U$-based sub-process and its $\mathcal{I}_F$-based
sub-process, with each independently auditable, then---combined
with the separate Operational Layer---the architecture has
reconstructed the full three-stage structure
(Remark~\ref{rem:reconstruction}).
This is not a genuine two-stage design.

\paragraph{Combining cases.}
In Case~A, $p_{O|U}$ is not independently estimable.
In Case~B, the factor $p_{F|OU}$ is undefined as a well-formed
conditioning event.
In Case~C, either $p_U$ and $p_{F|OU}$ are not independently
certifiable (sub-case~a), or the three-stage structure has been
reconstructed (sub-case~b).
Cases~A, B, C are exhaustive over all two-stage designs
(Proposition~\ref{prop:collapse}).
In no case does a genuine two-stage architecture yield (B4) as a
product of independently estimable and auditable layer-level
quantities.
\end{proof}

\begin{remark}[Structural and certification necessity are equivalent]
\label{rem:quantitative-necessity}
Proposition~\ref{prop:collapse} establishes that three independently
certified stages are the \emph{minimum} satisfying D1--D3
(structural necessity).
Proposition~\ref{prop:bound-degradation} establishes that three
independently certified stages are \emph{necessary and sufficient}
for (B4) to be a well-formed, independently verifiable
certification target (certification necessity).
These are not two separate arguments that happen to agree; both
follow from the same temporal ordering
$\tau_U < \tau_O < \tau_F$ of certifiable information.
\end{remark}

\section{Numerical Instantiation of the Safety Bounds}
\label{app:numerical}

This appendix instantiates bounds (B1)--(B4) on the caregiver-robot
running example using illustrative but empirically grounded estimates.
The purpose is not to certify a real deployment but to show that the
bounds produce meaningful numbers, that the conditional structure of (B4) can yield
quantifiable gains when upstream filtering is effective, and that the architecture's sensitivity to
individual layer weakening is concrete rather than asymptotic.

\subsection*{C.1\quad Layer-Level Probability Estimates}

We assign marginal satisfaction probabilities anchored to published
empirical results for comparable deployed components.

\paragraph{User Layer ($p_U = 0.95$).}
ShieldAgent achieves 90.1\% rule recall on ShieldAgent-Bench across
six web environments and seven risk categories, outperforming prior
methods by 11.3\% on average~\cite{chen2025shieldagent}.
More directly relevant to the embodied deployment context of the
running example, AgentSpec eliminates all hazardous actions across
every unsafe task category in embodied agent experiments
(100\% hazard prevention rate) and prevents unsafe executions in
over 90\% of code-agent cases~\cite{wang2025agentspeccustomizableruntimeenforcement}.
We set $p_U = 0.95$, conservative relative to AgentSpec's
embodied result, to account for residual exposure to informal
intent, policy ambiguity, and adversarial
manipulation~\cite{li2025mind}.

\paragraph{Operational Layer ($p_O = 0.90$).}
Without a dedicated ODD monitor, frontier LLMs fail to detect
12.24\% of direct out-of-domain queries under non-adversarial
conditions, rising to 70.72\% under adversarial
transformations~\cite{lei2026offtopiceval}.
We set $p_O = 0.90$, assuming a purpose-built deterministic ODD
checker (augmented with robust conformal
methods~\cite{zhao2024robust}) rather than a raw LLM call,
consistent with the deterministic-verdict design of
Section~\ref{sec:framework}. The gap $1-p_O = 0.10$ represents
residual failure under non-adversarial ODD boundary conditions.

\paragraph{Functional Layer ($p_F = 0.92$).}
CBF-constrained quadratic programs embedded in LLM planning loops
reduce safety violations substantially while maintaining task
efficiency in hardware experiments involving heterogeneous robotic
agents~\cite{khan2025safer}. Probabilistic CBF extensions now
provide explicit bounds on safety-failure probability for learned
barriers~\cite{mestres2025probabilistic,urrea2026probabilistic}.
We set $p_F = 0.92$, reflecting near-collision-free operation
under moderate dynamic uncertainty; the 8\% gap captures LLM
non-determinism, stochastic actuation, and scenarios such as the
rearranged furniture in Room~18 requiring mid-execution replanning.

\subsection*{C.2\quad Conditional Estimates and Upstream Filtering}

The chain-rule decomposition (B4) requires estimating the conditional
probabilities $p_{O|U}$ and $p_{F|OU}$. These conditionals reflect
the performance of each layer on inputs that have passed upstream
certification. Whether these conditionals exceed the corresponding
marginals is an empirical property of the deployment, not a structural
guarantee (see footnote in Section~\ref{sec:bounds}).

In the running example, it is plausible that User Layer filtering
improves downstream performance: blocking rest-hour entries,
camera-routing plans, and plans omitting conditional water-delivery
logic removes 4--6\% of plans that would require non-trivial ODD
adjudication. We therefore consider the \emph{illustrative} estimates
$p_{O|U} = 0.94$ and $p_{F|OU} = 0.96$, representing a scenario in
which upstream filtering is beneficial. For comparison, we also
evaluate (B4) under the conservative assumption $p_{O|U} = p_O = 0.90$
and $p_{F|OU} = p_F = 0.92$ (no filtering benefit), which recovers
the na\"{i}ve product.

Estimating these conditionals precisely from non-i.i.d.\ traces
remains the open problem of Section~4, regardless of whether they
exceed the marginals.

\subsection*{C.3\quad Bound Evaluation}

Table~\ref{tab:bounds} evaluates (B1)--(B4) under
$(p_U, p_O, p_F) = (0.95, 0.90, 0.92)$ and
$(p_{O|U}, p_{F|OU}) = (0.94, 0.96)$.
For (B2) we use pairwise co-failure rate $\rho = 0.02$
for all $i < j$, reflecting moderate positive correlation from the
shared LLM backbone (Section~4); sensitivity to $\rho$ is
analysed in Section~C.5.

\begin{table}[h]
\centering
\caption{Safety bounds on the caregiver-robot example.
  Marginals: $(p_U, p_O, p_F) = (0.95, 0.90, 0.92)$.
  Conditionals: $p_{O|U} = 0.94$, $p_{F|OU} = 0.96$.
  Co-failure rate (B2): $\rho = 0.02$.
  Triple co-failure (B3): $\Pr(F_U\cap F_O\cap F_F) = 0.004$.
  All quantities illustrative; see text for empirical grounding.
  The na\"{i}ve product assumes independence \emph{and} no filtering
  benefit ($p_{O|U} = p_O$, $p_{F|OU} = p_F$). The gap between
  (B4) and the na\"{i}ve product is deployment-dependent; see Section~C.4.}
\label{tab:bounds}
\renewcommand{\arraystretch}{1.25}
\resizebox{\textwidth}{!}{
\begin{tabular}{clcc}
\toprule
Bound & \multicolumn{1}{c}{Formula / note}
      & Value & Inputs required \\
\midrule
(B1)
  & $\max(0,\; p_U + p_O + p_F - 2)$
  & $0.770$
  & Marginals only \\
(B1$^{+}$)
  & Generalised BF~\cite{salako2025frechet} + structural
    constraints$^{\S}$
  & ${\geq}\,0.770$
  & Marginals + layer-independence facts \\
(B2)
  & $(B1) + \textstyle\sum_{i<j}\Pr(F_i \cap F_j)$
  & $0.830$
  & Marginals + pairwise co-failure \\
(B3)
  & $1 - \Pr(F_U \cup F_O \cup F_F)$
  & $0.826^{\dagger}$
  & Full inclusion-exclusion \\
(B4)
  & $p_U \cdot p_{O|U} \cdot p_{F|OU}$
  & $\mathbf{0.857}$
  & Conditionals; exact, no independence \\
\midrule
\multicolumn{2}{l}{\textit{Naive product (independence assumed,
  no filtering):} $p_U \cdot p_O \cdot p_F$}
  & $0.787$
  & Marginals + independence \\
\bottomrule
\end{tabular}
}
\smallskip

\raggedright\small
$^{\dagger}$~(B3) uses
$\Pr(F_U\!\cap\!F_O\!\cap\!F_F)=0.004$.
Computed as $1-(f_U+f_O+f_F - 3\rho + 0.004)
= 1-(0.05+0.10+0.08-0.06+0.004) = 1-0.174 = 0.826$.
Sensitive to the triple co-failure rate, which requires full
trace data to estimate.\\[4pt]
$^{\S}$~(B1$^{+}$) improves on (B1) when structural constraints
such as bounds on the number of layers that can simultaneously
fail are available. The CBF certificate of the Functional Layer
is derived from a dynamics model that does not share parameters
with the LLM backbone~\cite{pandya2025}, making a bound on
$\Pr(F_U\cap F_F)$ practically estimable and yielding a
tightened floor above 0.770.
\end{table}

\subsection*{C.4\quad Three Observations}

\paragraph{(i) When upstream filtering is beneficial, the gain is
substantial and compositional.}
Under the illustrative conditional estimates, the architecture achieves
$p_U \cdot p_{O|U} \cdot p_{F|OU}
= 0.95 \times 0.94 \times 0.96 = 0.857$, compared with the
na\"{i}ve product $p_U \cdot p_O \cdot p_F
= 0.95 \times 0.90 \times 0.92 = 0.787$ under independence and
no filtering benefit. The 7-point gap in this scenario arises
from the conditional structure of (B4) and illustrates the
potential value of upstream certification. Whether this gap is
realised in a given deployment is an empirical question; the
architecture's contribution is that it makes the relevant
conditionals well-defined and independently estimable, enabling
this question to be answered.

\paragraph{(ii) (B1) is the near-term deployable floor; it can
be systematically tightened.}
(B1) $= 0.770$ requires only marginal estimates, obtainable from
execution traces via martingale or mixing-process bounds
(Section~4), and provides a valid certification floor while the
harder conditional estimates for (B4) are developed.
It is however substantially more conservative than
(B4) under the illustrative conditional estimates ($= 0.857$):
the 8.7-point gap represents the cost of using marginals-only
information in this scenario, and motivates investment in
conditional estimation methods.
Salako's generalised Boole--Fréchet dynamic-programming
framework~\cite{salako2025frechet} shows this floor can be
systematically raised by incorporating structural constraints.
The CBF certificate's independence from the LLM
backbone~\cite{pandya2025} provides exactly such a constraint
for the $F_U \cap F_F$ pair, making (B1$^+$) a practically
reachable improvement.

\paragraph{(iii) A single weak layer degrades all bounds
irreversibly; no compensating strengthening suffices.}
Suppose the Operational Layer degrades to $p_O = 0.75$ under
persistent sensor degradation. Under the illustrative conditional
estimates, the effect is as follows.
\begin{align*}
\text{(B1):}\quad &\max(0,\;0.95+0.75+0.92-2) = 0.62
   &&\text{(was 0.770; loss of 15 pp)}\\
\text{(B1) with }p_U\!=\!p_F\!=\!0.99:\quad
   &\max(0,\;0.99+0.75+0.99-2) = 0.73
   &&\text{(still 4 pp below baseline)}
\end{align*}
The (B4) picture is identical in character: with $p_{O|U}$
degrading proportionally to $0.94 \times (0.75/0.90) = 0.783$,
\[
  p_U \cdot p_{O|U} \cdot p_{F|OU}
  = 0.95 \times 0.783 \times 0.96 = 0.714
  \quad\text{(was 0.857; loss of 14 pp).}
\]
Strengthening $p_U$ or $p_{F|OU}$ to 0.99 does not recover
this loss because the deficit enters multiplicatively through
$p_{O|U}$. This asymmetry motivates independent certification
of all three layers rather than investment concentrated in one.

\subsection*{C.5\quad Sensitivity to Co-failure Rate and a Caution
on (B2)}

Table~\ref{tab:sensitivity} reports (B2) as $\rho$ varies.
(B2) $= 0.770 + 3\rho$; it remains a valid lower bound for any
$\rho \leq 0.077$ (the point at which B2 would reach~1).

\begin{table}[h]
\centering
\caption{Sensitivity of (B2) to pairwise co-failure rate~$\rho$.
  (B1)$\,=\,0.770$ throughout.}
\label{tab:sensitivity}
\begin{tabular}{ccc}
\toprule
$\rho$ & (B2) & Gap over (B1) \\
\midrule
$0.005$ & $0.785$ & $+0.015$ \\
$0.010$ & $0.800$ & $+0.030$ \\
$0.020$ & $0.830$ & $+0.060$ \\
$0.030$ & $0.860$ & $+0.090$ \\
\bottomrule
\end{tabular}
\end{table}

A critical caution: higher $\rho$ makes (B2) look tighter while
true system risk simultaneously \emph{increases}.
A co-failure rate of $\rho = 0.030$ means all three layers share
substantial failure events---the backbone can bring all three down
at once---yet (B2) rises to 0.860, near (B4)'s 0.857 under
the illustrative conditional estimates.
This is precisely the danger identified in Section~4: under positive
correlation, (B1) and (B2) are valid but potentially misleading.
Until the conditional estimation gap is closed, (B4)
should be treated as the certification target while (B1) and (B2)
serve as conservative floors, not deployable certificates.

\newpage
\section*{NeurIPS Paper Checklist}

The checklist is designed to encourage best practices for responsible machine learning research, addressing issues of reproducibility, transparency, research ethics, and societal impact. Do not remove the checklist: {\bf The papers not including the checklist will be desk rejected.} The checklist should follow the references and follow the (optional) supplemental material.  The checklist does NOT count towards the page
limit. 

Please read the checklist guidelines carefully for information on how to answer these questions. For each question in the checklist:
\begin{itemize}
    \item You should answer \answerYes{}, \answerNo{}, or \answerNA{}.
    \item \answerNA{} means either that the question is Not Applicable for that particular paper or the relevant information is Not Available.
    \item Please provide a short (1--2 sentence) justification right after your answer (even for \answerNA). 
\end{itemize}

{\bf The checklist answers are an integral part of your paper submission.} They are visible to the reviewers, area chairs, senior area chairs, and ethics reviewers. You will also be asked to include it (after eventual revisions) with the final version of your paper, and its final version will be published with the paper.

The reviewers of your paper will be asked to use the checklist as one of the factors in their evaluation. While \answerYes{} is generally preferable to \answerNo{}, it is perfectly acceptable to answer \answerNo{} provided a proper justification is given (e.g., error bars are not reported because it would be too computationally expensive'' or ``we were unable to find the license for the dataset we used''). In general, answering \answerNo{} or \answerNA{} is not grounds for rejection. While the questions are phrased in a binary way, we acknowledge that the true answer is often more nuanced, so please just use your best judgment and write a justification to elaborate. All supporting evidence can appear either in the main paper or the supplemental material, provided in appendix. If you answer \answerYes{} to a question, in the justification please point to the section(s) where related material for the question can be found.

IMPORTANT, please:
\begin{itemize}
    \item {\bf Delete this instruction block, but keep the section heading ``NeurIPS Paper Checklist"},
    \item  {\bf Keep the checklist subsection headings, questions/answers and guidelines below.}
    \item {\bf Do not modify the questions and only use the provided macros for your answers}.
\end{itemize}


\begin{enumerate}

\item {\bf Claims}
    \item[] Question: Do the main claims made in the abstract and introduction accurately reflect the paper's contributions and scope?
    \item[] Answer: \answerYes{} 
    \item[] Justification: The abstract and introduction clearly state the 
central claim---that single-layer enforcement is structurally insufficient 
for LLM agent safety---and accurately scope it as a position paper 
presenting a formal structural argument (Appendix~B), a sketched 
architecture with probabilistic safety bounds (Section~3), and four 
open problems (Section~4) rather than empirical results. Aspirational 
goals such as a deployable standard are clearly flagged as future work.
    \item[] Guidelines:
    \begin{itemize}
        \item The answer \answerNA{} means that the abstract and introduction do not include the claims made in the paper.
        \item The abstract and/or introduction should clearly state the claims made, including the contributions made in the paper and important assumptions and limitations. A \answerNo{} or \answerNA{} answer to this question will not be perceived well by the reviewers. 
        \item The claims made should match theoretical and experimental results, and reflect how much the results can be expected to generalize to other settings. 
        \item It is fine to include aspirational goals as motivation as long as it is clear that these goals are not attained by the paper. 
    \end{itemize}

\item {\bf Limitations}
    \item[] Question: Does the paper discuss the limitations of the work performed by the authors?
    \item[] Answer: \answerYes{} 
    \item[] Justification: Section~4 explicitly discusses all major 
limitations: the framework is scoped to single-agent systems, treats 
the LLM as a fixed black box requiring re-estimation of probability 
bounds upon model change, and identifies three open problems---bound estimation from non-i.i.d.\ traces, graceful degradation of contracts under deployment drift,  and
extension to multi-agent settings---that stand between the architecture and a deployable 
standard. The two structural assumptions bounding applicability are 
stated explicitly at the opening of Section~4. 
    \item[] Guidelines:
    \begin{itemize}
        \item The answer \answerNA{} means that the paper has no limitation while the answer \answerNo{} means that the paper has limitations, but those are not discussed in the paper. 
        \item The authors are encouraged to create a separate ``Limitations'' section in their paper.
        \item The paper should point out any strong assumptions and how robust the results are to violations of these assumptions (e.g., independence assumptions, noiseless settings, model well-specification, asymptotic approximations only holding locally). The authors should reflect on how these assumptions might be violated in practice and what the implications would be.
        \item The authors should reflect on the scope of the claims made, e.g., if the approach was only tested on a few datasets or with a few runs. In general, empirical results often depend on implicit assumptions, which should be articulated.
        \item The authors should reflect on the factors that influence the performance of the approach. For example, a facial recognition algorithm may perform poorly when image resolution is low or images are taken in low lighting. Or a speech-to-text system might not be used reliably to provide closed captions for online lectures because it fails to handle technical jargon.
        \item The authors should discuss the computational efficiency of the proposed algorithms and how they scale with dataset size.
        \item If applicable, the authors should discuss possible limitations of their approach to address problems of privacy and fairness.
        \item While the authors might fear that complete honesty about limitations might be used by reviewers as grounds for rejection, a worse outcome might be that reviewers discover limitations that aren't acknowledged in the paper. The authors should use their best judgment and recognize that individual actions in favor of transparency play an important role in developing norms that preserve the integrity of the community. Reviewers will be specifically instructed to not penalize honesty concerning limitations.
    \end{itemize}

\item {\bf Theory assumptions and proofs}
    \item[] Question: For each theoretical result, does the paper provide the full set of assumptions and a complete (and correct) proof?
    \item[] Answer: \answerYes{} 
    \item[] Justification: All theoretical results state their assumptions 
explicitly. The structural necessity argument is formalised in 
Appendix~B with a complete proof of Proposition~1 (Collapse Argument) 
covering all three cases, Corollary~1, and Remarks~2--4. The 
probabilistic bounds (B1)--(B4) are stated with full assumptions in 
Section~3.2 and elaborated in Appendix~A.2, including the estimation 
caveat and the conditions under which each bound is tighter than the 
others. 
    \item[] Guidelines:
    \begin{itemize}
        \item The answer \answerNA{} means that the paper does not include theoretical results. 
        \item All the theorems, formulas, and proofs in the paper should be numbered and cross-referenced.
        \item All assumptions should be clearly stated or referenced in the statement of any theorems.
        \item The proofs can either appear in the main paper or the supplemental material, but if they appear in the supplemental material, the authors are encouraged to provide a short proof sketch to provide intuition. 
        \item Inversely, any informal proof provided in the core of the paper should be complemented by formal proofs provided in appendix or supplemental material.
        \item Theorems and Lemmas that the proof relies upon should be properly referenced. 
    \end{itemize}

    \item {\bf Experimental result reproducibility}
    \item[] Question: Does the paper fully disclose all the information needed to reproduce the main experimental results of the paper to the extent that it affects the main claims and/or conclusions of the paper (regardless of whether the code and data are provided or not)?
    \item[] Answer: \answerNA{} 
    \item[] Justification: This is a position paper with no experiments. 
All empirical figures cited (AgentSafetyBench, 
Agent Security Bench) are drawn from published third-party benchmarks 
with full citations, and are used only to motivate the structural 
argument rather than as primary evidence for the claims. 
    \item[] Guidelines:
    \begin{itemize}
        \item The answer \answerNA{} means that the paper does not include experiments.
        \item If the paper includes experiments, a \answerNo{} answer to this question will not be perceived well by the reviewers: Making the paper reproducible is important, regardless of whether the code and data are provided or not.
        \item If the contribution is a dataset and\slash or model, the authors should describe the steps taken to make their results reproducible or verifiable. 
        \item Depending on the contribution, reproducibility can be accomplished in various ways. For example, if the contribution is a novel architecture, describing the architecture fully might suffice, or if the contribution is a specific model and empirical evaluation, it may be necessary to either make it possible for others to replicate the model with the same dataset, or provide access to the model. In general. releasing code and data is often one good way to accomplish this, but reproducibility can also be provided via detailed instructions for how to replicate the results, access to a hosted model (e.g., in the case of a large language model), releasing of a model checkpoint, or other means that are appropriate to the research performed.
        \item While NeurIPS does not require releasing code, the conference does require all submissions to provide some reasonable avenue for reproducibility, which may depend on the nature of the contribution. For example
        \begin{enumerate}
            \item If the contribution is primarily a new algorithm, the paper should make it clear how to reproduce that algorithm.
            \item If the contribution is primarily a new model architecture, the paper should describe the architecture clearly and fully.
            \item If the contribution is a new model (e.g., a large language model), then there should either be a way to access this model for reproducing the results or a way to reproduce the model (e.g., with an open-source dataset or instructions for how to construct the dataset).
            \item We recognize that reproducibility may be tricky in some cases, in which case authors are welcome to describe the particular way they provide for reproducibility. In the case of closed-source models, it may be that access to the model is limited in some way (e.g., to registered users), but it should be possible for other researchers to have some path to reproducing or verifying the results.
        \end{enumerate}
    \end{itemize}

\item {\bf Open access to data and code}
    \item[] Question: Does the paper provide open access to the data and code, with sufficient instructions to faithfully reproduce the main experimental results, as described in supplemental material?
    \item[] Answer: \answerNA{} 
    \item[] Justification: This is a position paper. No code or datasets 
are introduced or used. 
    \item[] Guidelines:
    \begin{itemize}
        \item The answer \answerNA{} means that paper does not include experiments requiring code.
        \item Please see the NeurIPS code and data submission guidelines (\url{https://neurips.cc/public/guides/CodeSubmissionPolicy}) for more details.
        \item While we encourage the release of code and data, we understand that this might not be possible, so \answerNo{} is an acceptable answer. Papers cannot be rejected simply for not including code, unless this is central to the contribution (e.g., for a new open-source benchmark).
        \item The instructions should contain the exact command and environment needed to run to reproduce the results. See the NeurIPS code and data submission guidelines (\url{https://neurips.cc/public/guides/CodeSubmissionPolicy}) for more details.
        \item The authors should provide instructions on data access and preparation, including how to access the raw data, preprocessed data, intermediate data, and generated data, etc.
        \item The authors should provide scripts to reproduce all experimental results for the new proposed method and baselines. If only a subset of experiments are reproducible, they should state which ones are omitted from the script and why.
        \item At submission time, to preserve anonymity, the authors should release anonymized versions (if applicable).
        \item Providing as much information as possible in supplemental material (appended to the paper) is recommended, but including URLs to data and code is permitted.
    \end{itemize}

\item {\bf Experimental setting/details}
    \item[] Question: Does the paper specify all the training and test details (e.g., data splits, hyperparameters, how they were chosen, type of optimizer) necessary to understand the results?
    \item[] Answer: \answerNA{} 
    \item[] Justification: This is a position paper with no experiments 
or training procedures. 
    \item[] Guidelines:
    \begin{itemize}
        \item The answer \answerNA{} means that the paper does not include experiments.
        \item The experimental setting should be presented in the core of the paper to a level of detail that is necessary to appreciate the results and make sense of them.
        \item The full details can be provided either with the code, in appendix, or as supplemental material.
    \end{itemize}

\item {\bf Experiment statistical significance}
    \item[] Question: Does the paper report error bars suitably and correctly defined or other appropriate information about the statistical significance of the experiments?
    \item[] Answer: \answerNA{} 
    \item[] Justification: This is a position paper with no experiments. 
The probabilistic bounds in Section~3.2 are theoretical guarantees, 
not empirical measurements. 
    \item[] Guidelines:
    \begin{itemize}
        \item The answer \answerNA{} means that the paper does not include experiments.
        \item The authors should answer \answerYes{} if the results are accompanied by error bars, confidence intervals, or statistical significance tests, at least for the experiments that support the main claims of the paper.
        \item The factors of variability that the error bars are capturing should be clearly stated (for example, train/test split, initialization, random drawing of some parameter, or overall run with given experimental conditions).
        \item The method for calculating the error bars should be explained (closed form formula, call to a library function, bootstrap, etc.)
        \item The assumptions made should be given (e.g., Normally distributed errors).
        \item It should be clear whether the error bar is the standard deviation or the standard error of the mean.
        \item It is OK to report 1-sigma error bars, but one should state it. The authors should preferably report a 2-sigma error bar than state that they have a 96\% CI, if the hypothesis of Normality of errors is not verified.
        \item For asymmetric distributions, the authors should be careful not to show in tables or figures symmetric error bars that would yield results that are out of range (e.g., negative error rates).
        \item If error bars are reported in tables or plots, the authors should explain in the text how they were calculated and reference the corresponding figures or tables in the text.
    \end{itemize}

\item {\bf Experiments compute resources}
    \item[] Question: For each experiment, does the paper provide sufficient information on the computer resources (type of compute workers, memory, time of execution) needed to reproduce the experiments?
    \item[] Answer: \answerNA{} 
    \item[] Justification: This is a position paper with no computational 
experiments. 
    \item[] Guidelines:
    \begin{itemize}
        \item The answer \answerNA{} means that the paper does not include experiments.
        \item The paper should indicate the type of compute workers CPU or GPU, internal cluster, or cloud provider, including relevant memory and storage.
        \item The paper should provide the amount of compute required for each of the individual experimental runs as well as estimate the total compute. 
        \item The paper should disclose whether the full research project required more compute than the experiments reported in the paper (e.g., preliminary or failed experiments that didn't make it into the paper). 
    \end{itemize}
    
\item {\bf Code of ethics}
    \item[] Question: Does the research conducted in the paper conform, in every respect, with the NeurIPS Code of Ethics \url{https://neurips.cc/public/EthicsGuidelines}?
    \item[] Answer: \answerYes{} 
    \item[] Justification: The research conforms with the NeurIPS Code of 
Ethics. The paper advocates for safer deployment of LLM agents in 
safety-critical settings, raises no dual-use concerns, involves no 
human subjects, and introduces no artefacts with misuse potential. 
    \item[] Guidelines:
    \begin{itemize}
        \item The answer \answerNA{} means that the authors have not reviewed the NeurIPS Code of Ethics.
        \item If the authors answer \answerNo, they should explain the special circumstances that require a deviation from the Code of Ethics.
        \item The authors should make sure to preserve anonymity (e.g., if there is a special consideration due to laws or regulations in their jurisdiction).
    \end{itemize}

\item {\bf Broader impacts}
    \item[] Question: Does the paper discuss both potential positive societal impacts and negative societal impacts of the work performed?
    \item[] Answer: \answerYes{} 
    \item[] Justification: The paper's primary contribution is a safety 
architecture for LLM agents in safety-critical deployments. The 
positive societal impact is improved runtime assurance for autonomous 
systems operating near humans. Regarding negative impacts: the 
framework constrains rather than enables harmful behaviour, introduces 
no new attack surface, and the open problems identified in Section~4 
are prerequisites for deployment rather than risks introduced by the 
work itself. One potential concern is that premature deployment of the 
architecture without closing the open problems of Section~4 could 
create false assurance; this is explicitly cautioned against in 
Section~4 and the Conclusion. 
    \item[] Guidelines:
    \begin{itemize}
        \item The answer \answerNA{} means that there is no societal impact of the work performed.
        \item If the authors answer \answerNA{} or \answerNo, they should explain why their work has no societal impact or why the paper does not address societal impact.
        \item Examples of negative societal impacts include potential malicious or unintended uses (e.g., disinformation, generating fake profiles, surveillance), fairness considerations (e.g., deployment of technologies that could make decisions that unfairly impact specific groups), privacy considerations, and security considerations.
        \item The conference expects that many papers will be foundational research and not tied to particular applications, let alone deployments. However, if there is a direct path to any negative applications, the authors should point it out. For example, it is legitimate to point out that an improvement in the quality of generative models could be used to generate Deepfakes for disinformation. On the other hand, it is not needed to point out that a generic algorithm for optimizing neural networks could enable people to train models that generate Deepfakes faster.
        \item The authors should consider possible harms that could arise when the technology is being used as intended and functioning correctly, harms that could arise when the technology is being used as intended but gives incorrect results, and harms following from (intentional or unintentional) misuse of the technology.
        \item If there are negative societal impacts, the authors could also discuss possible mitigation strategies (e.g., gated release of models, providing defenses in addition to attacks, mechanisms for monitoring misuse, mechanisms to monitor how a system learns from feedback over time, improving the efficiency and accessibility of ML).
    \end{itemize}
    
\item {\bf Safeguards}
    \item[] Question: Does the paper describe safeguards that have been put in place for responsible release of data or models that have a high risk for misuse (e.g., pre-trained language models, image generators, or scraped datasets)?
    \item[] Answer: \answerNA{} 
    \item[] Justification: The paper releases no models, datasets, or 
code and proposes no artefacts with misuse risk. 
    \item[] Guidelines:
    \begin{itemize}
        \item The answer \answerNA{} means that the paper poses no such risks.
        \item Released models that have a high risk for misuse or dual-use should be released with necessary safeguards to allow for controlled use of the model, for example by requiring that users adhere to usage guidelines or restrictions to access the model or implementing safety filters. 
        \item Datasets that have been scraped from the Internet could pose safety risks. The authors should describe how they avoided releasing unsafe images.
        \item We recognize that providing effective safeguards is challenging, and many papers do not require this, but we encourage authors to take this into account and make a best faith effort.
    \end{itemize}

\item {\bf Licenses for existing assets}
    \item[] Question: Are the creators or original owners of assets (e.g., code, data, models), used in the paper, properly credited and are the license and terms of use explicitly mentioned and properly respected?
    \item[] Answer: \answerYes{} 
    \item[] Justification: All third-party works cited are properly 
referenced with full bibliographic entries. The paper uses no datasets 
or software assets directly; all cited benchmarks and tools are 
referenced for motivational purposes only, not reused or redistributed. 
    \item[] Guidelines:
    \begin{itemize}
        \item The answer \answerNA{} means that the paper does not use existing assets.
        \item The authors should cite the original paper that produced the code package or dataset.
        \item The authors should state which version of the asset is used and, if possible, include a URL.
        \item The name of the license (e.g., CC-BY 4.0) should be included for each asset.
        \item For scraped data from a particular source (e.g., website), the copyright and terms of service of that source should be provided.
        \item If assets are released, the license, copyright information, and terms of use in the package should be provided. For popular datasets, \url{paperswithcode.com/datasets} has curated licenses for some datasets. Their licensing guide can help determine the license of a dataset.
        \item For existing datasets that are re-packaged, both the original license and the license of the derived asset (if it has changed) should be provided.
        \item If this information is not available online, the authors are encouraged to reach out to the asset's creators.
    \end{itemize}

\item {\bf New assets}
    \item[] Question: Are new assets introduced in the paper well documented and is the documentation provided alongside the assets?
    \item[] Answer: \answerNA{} 
    \item[] Justification: The paper introduces no new datasets, models, 
software, or other artefacts requiring documentation. 
    \item[] Guidelines:
    \begin{itemize}
        \item The answer \answerNA{} means that the paper does not release new assets.
        \item Researchers should communicate the details of the dataset\slash code\slash model as part of their submissions via structured templates. This includes details about training, license, limitations, etc. 
        \item The paper should discuss whether and how consent was obtained from people whose asset is used.
        \item At submission time, remember to anonymize your assets (if applicable). You can either create an anonymized URL or include an anonymized zip file.
    \end{itemize}

\item {\bf Crowdsourcing and research with human subjects}
    \item[] Question: For crowdsourcing experiments and research with human subjects, does the paper include the full text of instructions given to participants and screenshots, if applicable, as well as details about compensation (if any)? 
    \item[] Answer: \answerNA{} 
    \item[] Justification: The paper involves no crowdsourcing or research 
with human subjects. 
    \item[] Guidelines:
    \begin{itemize}
        \item The answer \answerNA{} means that the paper does not involve crowdsourcing nor research with human subjects.
        \item Including this information in the supplemental material is fine, but if the main contribution of the paper involves human subjects, then as much detail as possible should be included in the main paper. 
        \item According to the NeurIPS Code of Ethics, workers involved in data collection, curation, or other labor should be paid at least the minimum wage in the country of the data collector. 
    \end{itemize}

\item {\bf Institutional review board (IRB) approvals or equivalent for research with human subjects}
    \item[] Question: Does the paper describe potential risks incurred by study participants, whether such risks were disclosed to the subjects, and whether Institutional Review Board (IRB) approvals (or an equivalent approval/review based on the requirements of your country or institution) were obtained?
    \item[] Answer: \answerNA{} 
    \item[] Justification: The paper involves no human subjects research 
and therefore requires no IRB approval or equivalent. 
    \item[] Guidelines:
    \begin{itemize}
        \item The answer \answerNA{} means that the paper does not involve crowdsourcing nor research with human subjects.
        \item Depending on the country in which research is conducted, IRB approval (or equivalent) may be required for any human subjects research. If you obtained IRB approval, you should clearly state this in the paper. 
        \item We recognize that the procedures for this may vary significantly between institutions and locations, and we expect authors to adhere to the NeurIPS Code of Ethics and the guidelines for their institution. 
        \item For initial submissions, do not include any information that would break anonymity (if applicable), such as the institution conducting the review.
    \end{itemize}

\item {\bf Declaration of LLM usage}
    \item[] Question: Does the paper describe the usage of LLMs if it is an important, original, or non-standard component of the core methods in this research? Note that if the LLM is used only for writing, editing, or formatting purposes and does \emph{not} impact the core methodology, scientific rigor, or originality of the research, declaration is not required.
    \item[] Answer: \answerNA{} 
    \item[] Justification: LLMs are the object of study in this paper, 
not a component of the research methodology. No LLMs were used as 
an important, original, or non-standard component of the core methods, 
proofs, or architectural proposals. Any use of LLMs was limited to 
writing assistance, which does not require declaration under NeurIPS 
2026 policy. 
    \item[] Guidelines:
    \begin{itemize}
        \item The answer \answerNA{} means that the core method development in this research does not involve LLMs as any important, original, or non-standard components.
        \item Please refer to our LLM policy in the NeurIPS handbook for what should or should not be described.
    \end{itemize}

\end{enumerate}

\end{document}